\newcommand{\printfnsymbol}[1]{%
  \textsuperscript{\@fnsymbol{#1}}%
}
\newtheorem{proposition}{Proposition}
\newcommand{\ie}{\textit{i.e.}}
\newcommand{\eg}{\textit{e.g.} }
\title{Meta Internal Learning}
\author{
  Raphael Bensadoun, Shir Gur, Tomer Galanti, Lior Wolf\\
  The School of Computer Science, Tel Aviv University\\
  
 }
\begin{document}

\maketitle

\doparttoc 
\faketableofcontents 
\part{} 

\begin{abstract}
Internal learning for single-image generation is a framework, where a generator is trained to produce novel images based on a single image. Since these models are trained on a single image, they are limited in their scale and application. To overcome these issues, we propose a meta-learning approach that enables training over a collection of images, in order to model the internal statistics of the sample image more effectively.
In the presented meta-learning approach, a single-image GAN model is generated given an input image, via a convolutional feedforward hypernetwork $f$. This network is trained over a dataset of images, allowing for feature sharing among different models, and for interpolation in the space of generative models. The generated single-image model contains a hierarchy of multiple generators and discriminators. It is therefore required to train the meta-learner in an adversarial manner, which requires careful design choices that we justify by a theoretical analysis. Our results show that the models obtained are as suitable as single-image GANs for many common image applications, {significantly reduce the training time per image without loss in performance}, and introduce novel capabilities, such as interpolation and feedforward modeling of novel images. 
Our code is available at: \url{https://github.com/RaphaelBensTAU/MetaInternalLearning}.
\end{abstract}

\section{Introduction}
{In the field of internal learning, one wishes to learn the internal statistics of a signal in order to perform various downstream tasks. In this work, we focus on Single image GANs~\cite{shaham2019singan,shocher2018ingan,hinz2021improved,gur2020hierarchical}, which present extremely impressive results in modeling the distribution of images that are similar to the input image, and in applying this distribution to a variety of applications.} However, given that there is no shortage of unlabeled images, one may ask whether a better approach would be to model multiple images and only then condition the model on a single input image. 
Doing so, one could (i) benefit from knowledge and feature sharing between the different images, (ii) {better define} the boundaries between the distribution obtained from the input image and those of other images, (iii) possibly avoid the costly training phase given a novel image, and instead employ feedforward inference, and (iv) mix different single-image models to create novel types of images. 

From the algorithmic standpoint, this multi-image capability can be attempted using various forms of conditioning. For example, one can add a one-hot vector as an input, or, more generally, a vector signature, and train multiple images using the same single image method. One can also add a complete layer of a conditioning signal to the RGB input. Alternatively, one can employ StyleGAN-like conditioning and modify the normalization of the layers~\cite{karras2019style}. More generally, observing that this scenario is a meta-learning problem, one can employ methods, such as MAML~\cite{finn2017model} for learning a central network and its per-image variants. After performing many such attempts over a long period of time, we were not able to bring any of these methods to a desirable level of performance. 

Instead, we advocate for a meta-learning solution that is based on the hypernetworks scheme~\cite{ha2016hypernetworks}. Hypernetworks consist of two main components: a primary network $g$ that performs the actual computation, and the hypernetwork $f$ that is used for conditioning. The parameters (weights) of $g$ are not learned conventionally. Instead, they are given as the output of $f$ given the conditioned input signal.
Following a single-image GAN setting with a hierarchical structure, we have two hypernetworks $f_g$ and $f_d$, which dynamically produce the weights of the multiple generators and multiple discriminators, given the input image $I$. 

Our method allows for training on multiple images at once, obtaining similar results for various applications previously demonstrated for single image training. It also allows us to interpolate between single-image GANs derived from pairs of images (or more). Finally, we are able to fit a new unseen image in a fraction of the time that is required for training a new single image GAN, \ie our method enables inference generation for a novel image.

Since we are the first method, as far as we can ascertain, to perform adversarial training with hypernetworks, we provide a theoretical analysis of the proper way to perform this. It shows both the sufficiency of our algorithm for minimizing the objective function as well as the necessity of various components in our method. 

\section{Background}\label{sec:back}

In this paper, we consider the meta-learning problem of learning to generate a variety of samples from a single image, where each individual learning problem is defined by this single image input. For this purpose, we first recall the setting of single-image generation as in~\cite{shaham2019singan,hinz2021improved,gur2020hierarchical}.

\subsection{Single-Image Generation}
\label{sec:single_image_generation}
We start by describing single-image generation as introduced in SinGAN~\cite{shaham2019singan}. SinGAN is composed of a multi-scale residual generator $G = \{g_1,\dots,g_k\}$ and a patch-discriminator $D = \{d_1,\dots,d_k\}$, where $g_i$ and $d_i$ are fully-convolutional networks, consisting of five layers, and used for the training at scale $i$.
{Given an image $I$, we pre-compute $k$ scales of the image, from coarsest to finest, denoted by $I_i$, with height and width $h_i$ and $w_i$, and use each $I_i$ for the training the $i$'th generator $g_i$.}

The first generator $g_1$ takes as input a fixed random noise $z_1 \in \mathbb{R}^{3 \times h_1 \times w_1}$ whose coordinates are i.i.d. normally distributed, and outputs an image $\hat{I}_1 \in \mathbb{R}^{3 \times h_1 \times w_1}$. 
Every other generator $g_i$ takes as an input an upsampled version $\hat{I}^{\uparrow}_{i-1}$ of the previous output $\hat{I}_{i-1}$, and a noise $z_i\in \mathbb{R}^{3 \times h_i \times w_i}$ (whose coordinates are i.i.d. normally distributed) and recursively generates a sample at scale $i$ as follows:
\begin{align}
\label{eq:g}
\hat{I}_1 := g_1(z_1) := \hat{g}_1(z_1)\,,\quad
\hat{I}_i := g_i(\hat{I}_{i-1},z_i) := \hat{I}^{\uparrow}_{i-1} + \hat{g}_i(\hat{I}^{\uparrow}_{i-1} + z_i) \,, \quad i>1
\end{align}
For each scale $i\in [k]$, we denote by $\mathcal{D}_{I,i}$ the distribution of patches $u_{I,i}$ within $I_i$ and by $\mathcal{D}_{\hat{I},i}$ the distribution of patches $u_{\hat{I},i}$ within $\hat{I}_i$ (for $z_1,\dots,z_i \sim \mathcal{N}(0,\mathbb{I})$). The goal of this method is to train each generator $g_i$ to generate samples $\hat{I}_i$, such that, $\mathcal{D}_{\hat{I},i}$  and $\mathcal{D}_{I,i}$ would closely match.

For this task, the generators $g_i$ are progressively optimized to minimize the 1-Wasserstein distance $W(\mathcal{D}_{\hat{I},i},\mathcal{D}_{I,i})$ between the distributions $\mathcal{D}_{\hat{I},i}$ and $\mathcal{D}_{I,i}$. 
The 1-Wasserstein distance between two distributions $\mathcal{D}_1$ and $\mathcal{D}_2$ is defined as follows:
\begin{equation}\label{eq:w}
\begin{aligned}
W(\mathcal{D}_1,\mathcal{D}_2):=&\max_{d:~\|d\|_{L}\leq 1} \left\{\mathop{\mathbb{E}}_{u \sim \mathcal{D}_1} d(u) - \mathop{\mathbb{E}}_{u \sim \mathcal{D}_2} d(u) \right\}, \\
\end{aligned}
\end{equation}
where $\|d\|_{L}$ is the Lipschitz norm of the discriminator $d$.

In general, computing the maximum in Eq.~\ref{eq:w} is intractable. Therefore,~\cite{pmlr-v70-arjovsky17a} suggested to estimate the 1-Wasserstein distance using a pseudo-metric $W_{\mathcal{C}}(\mathcal{D}_1,\mathcal{D}_2)$, where $d$ is parameterized using a neural network from a wide class $\mathcal{C}$.
The method minimizes the adversarial loss, derived from Eq.~\ref{eq:w}, 
\begin{equation}
\mathcal{L}_{adv}(g_i,d_i) :=  \mathop{\mathbb{E}}_{z_{1:i}} [d_i(u_{\hat{I},i})] - \mathop{\mathbb{E}}_{u_{I,i}} [d_i(u_{I,i})],
\end{equation}
where $z_{1:i} = (z_1,\dots,z_{i})$, $u_{\hat{I},i} \sim  \mathcal{D}_{\hat{I},i}$ and $u_{I,i} \sim  \mathcal{D}_{I,i}$.
The above objective is minimized with respect to the parameters of $g_i$, and maximize it with respect to the parameters of the discriminator $d_i$, while freezing the parameters of all previous generators $g_1,\dots,g_{i-1}$. Note that $\hat{I}_i$ is given by $g_i$ according to Eq.~\ref{eq:g}.
In order to guarantee that $d_i$ is of a bounded Lipschitz constant, in~\cite{wgangp} they apply an additional gradient penalty loss to regularize the Lipschitzness of the discriminator:
\begin{equation}
\mathcal{L}_{lip}(d_i) := \mathop{\mathbb{E}}_{u_{I,i}} \left[\|\nabla_u d_i(u_{I,i})\|^2_2\right],    
\end{equation}
In addition, they employ a reconstruction loss. We let $z^0_1$ be a fixed random noise, such that:
\begin{align}
\label{eq:z0}
\hat{I}^0_1 := \hat{g}_1(z^0_1)\,,\quad
\hat{I}^0_i := \hat{I}^{0,\uparrow}_{i-1} + \hat{g}_i(\hat{I}^{0,\uparrow}_{i-1}) \,, \quad i>1
\end{align}
In practice, the expected values with respect to the various distributions are replaced with averages over finite sample sets. For simplicity, throughout the paper we will use expectations to avoid clutter.

\subsection{Hypernetworks}

Formally, a hypernetwork $h(z;f(I;\theta_f))$ is a pair of collaborating neural networks, $f$ and $h$. For an input $I$, network $f$, parameterized by a set $\theta_f$ of trainable parameters, produces the weights $\theta_I = f(I;\theta_f)$ for the {\em primary network} $h$. The network $h$ takes an input $z$, and returns an output $h(z;\theta_I)$ that depends on both $z$ and the task specific input $I$. In practice, $f$ is typically a large neural network and $h$ is a small neural network.  Throughout the paper, we use ``$;$'' as a separator between the input and trainable parameters of a neural network.

\section{Method}
\label{sec:method}
Our method solves an inherent limitation of current single-sample GAN models, which is the scaling to multi-sample learning, such that the same network can perform single-image generations for each sample. For this purpose, we adopt a hypernetwork based modeling for the involved generators and discriminators. 
The hypernetwork network $f_g$ produces weights for each $g_i$, and a hypernetwork $f_d$  produces weights for each $d_i$. In this setting, $g_i$ and $d_i$ consist of the same architecture presented in Sec.~\ref{sec:back}, and serve as the primary networks for $f_g$ and $f_d$ (resp.).

Two alternatives for the proposed setting are presented for completeness and are briefly discussed in Sec.~\ref{sec:analysis}, (i) shared discriminator and (ii) shared feature extractor. These alternatives help in understanding our proposed approach. The full description and proofs are presented in the appendix.
An illustration of the proposed model and the two variants are presented in Fig.~\ref{fig:our_arch}.

\begin{figure}[t]
	\centering
	\begin{tabular}{ccc}
	\includegraphics[height=93px]{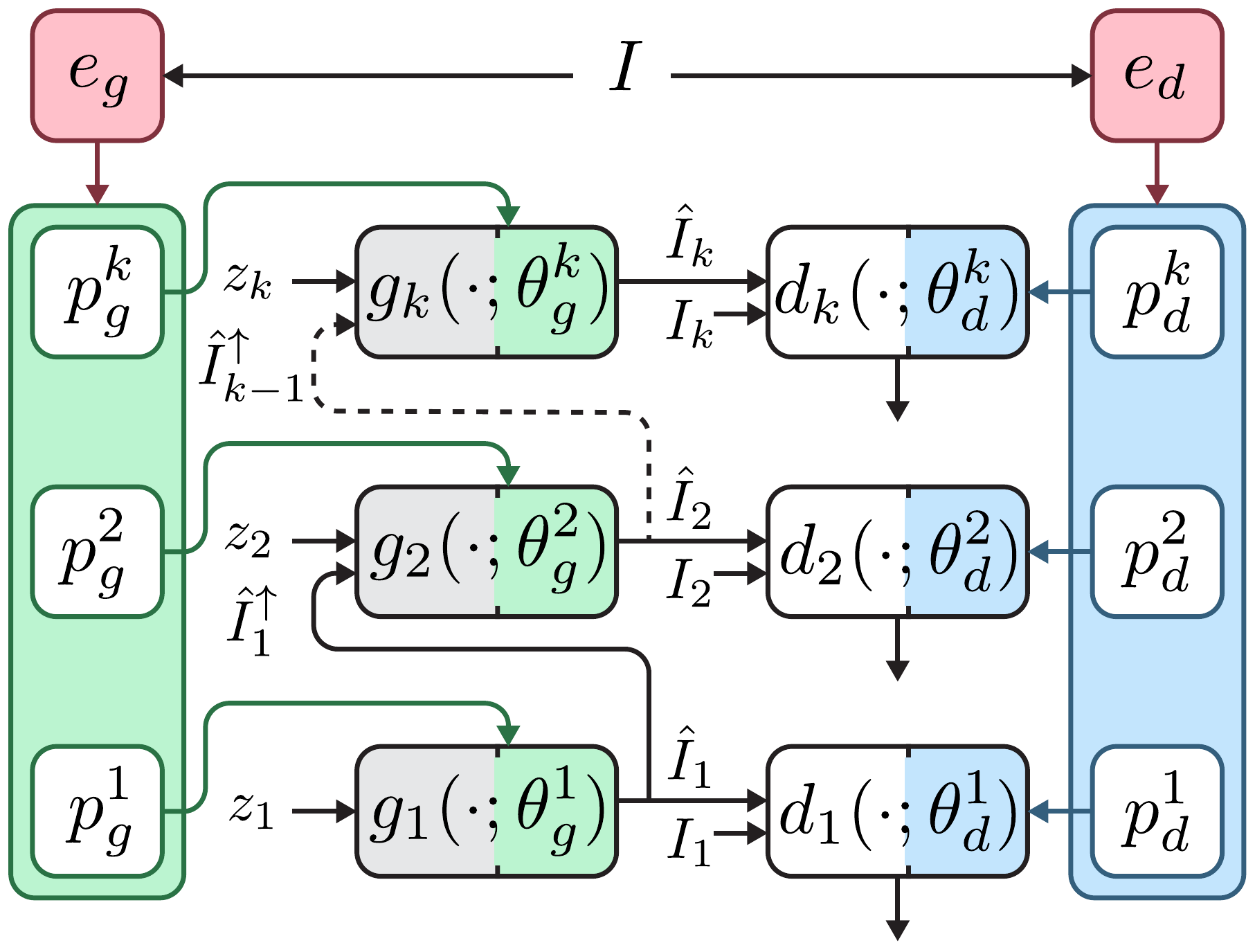}&
	\includegraphics[height=106px]{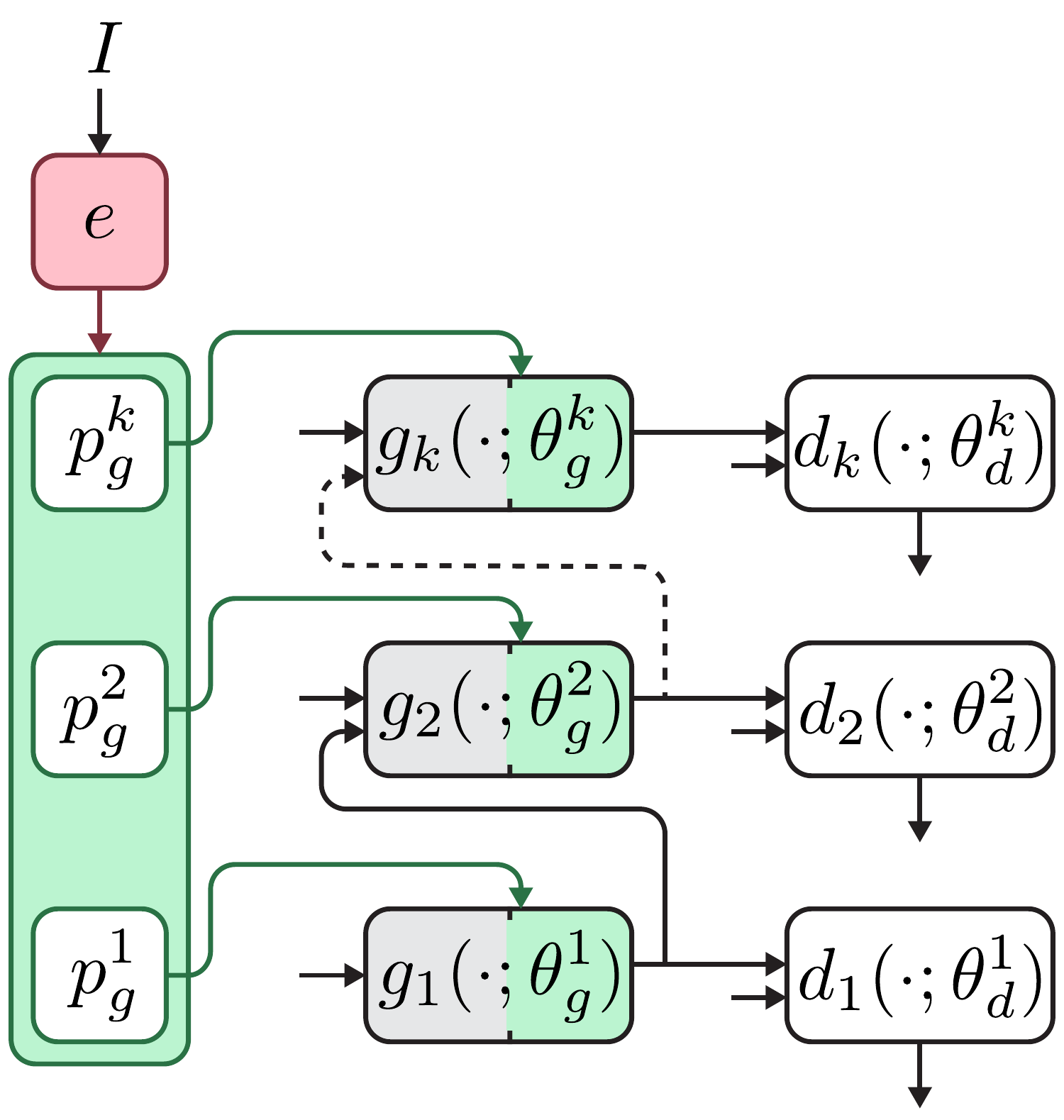}&
    \includegraphics[height=106px]{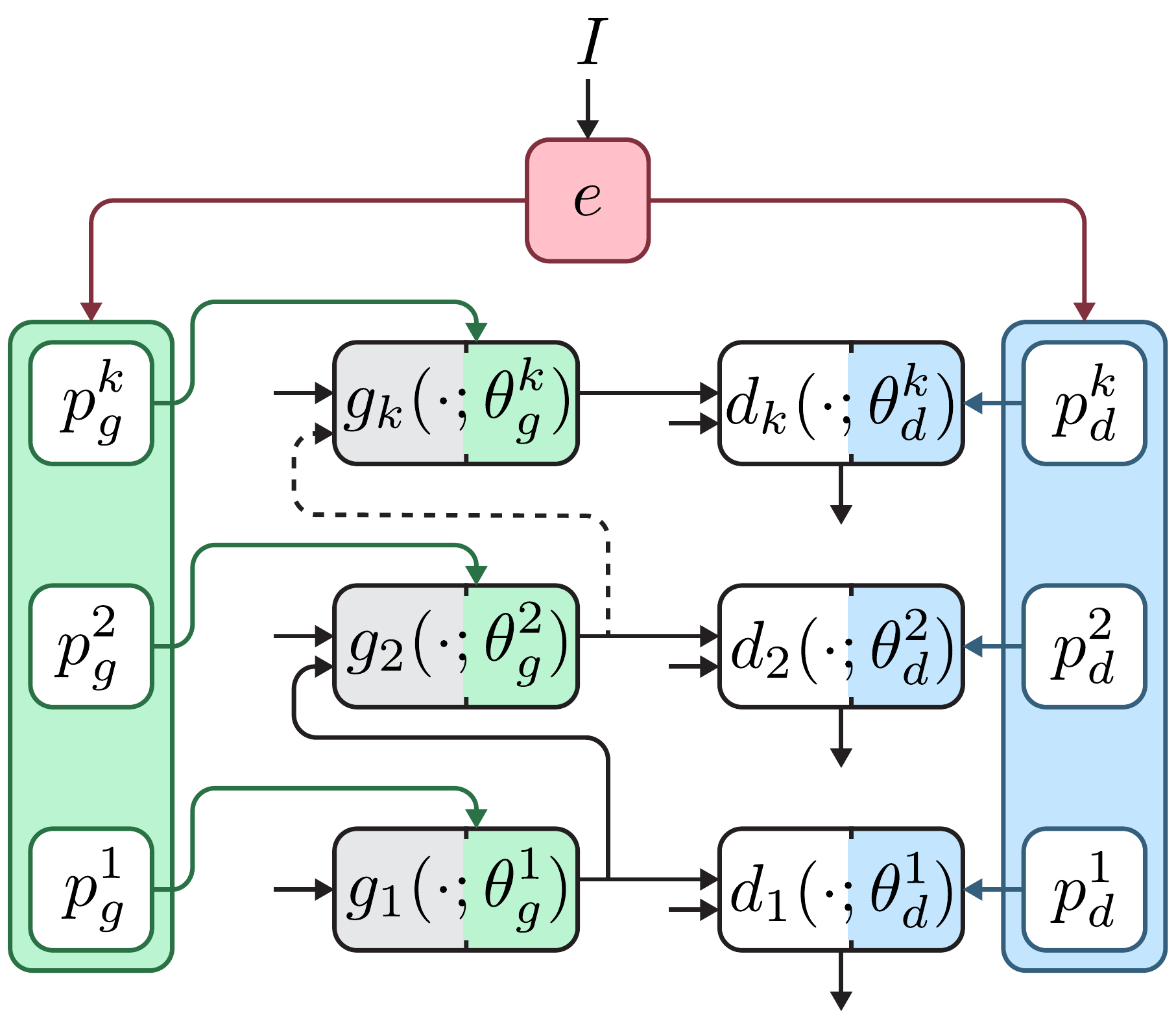}\\
    (a) & (b) & (c)
    \end{tabular}
    \caption{{\bf Alternative architectures for hypernetwork single image generators.} {\bf(a)} Our model architecture, consists of two embedding networks - $e_g$ for the {\em hyper-generator} and $e_d$ for the {\em hyper-discriminator}. The primary networks $g_i$ and $d_i$ follows Sec.~\ref{sec:single_image_generation}. {\bf (b)} Hyper-Generator with shared discriminator. {\bf (c)} Shared feature extractor. We omit the input/output names for clarity.}
    \label{fig:our_arch}
    \label{fig:shared_arch}
\end{figure}

\subsection{The Model}
Our model consists of two main components: a {\em hyper-generator} and a {\em hyper-discriminator} (see Fig~\ref{fig:our_arch}(a) for an illustration). The hyper-generator $g_i$ is a hypernetwork that is defined as follows:
\begin{equation}
\begin{aligned}
g_i(z,I) &:= g_i(z;f^i_g(I;\theta_{f_g})),
\end{aligned}
\end{equation}
where $f^i_g(I;\theta_{f_g})$ is a neural network that takes an input image $I$, and returns a vector of weights for the $i$'th generator $g_i$. This network is decomposed into an embedding network $e_g$ that is shared among scales and a linear projection $p^i_g$ per scale,
\begin{align}
    E_g(I) &:= e_g(I;\theta_{e_g})\\
    f^i_g(I;\theta_{f_g}) &:= p^i_g(E_g(I);\theta^i_{p_g}) 
\end{align}

The network $e_g$ is parameterized with a set of parameters $\theta_{e_g}$, and each projection $p^i_g$ is parameterized with a set of parameters $\theta^i_{p_g}$ (for simplicity, we denote their union by $\theta_{p_g} = (\theta^i_{p_g})^{k}_{i=1}$). {Each $g_i$ is a fully-convolutional network, following Sec.~\ref{sec:single_image_generation}, whose weights are $\theta_g^i := f^i_g(I;\theta_{f_g})$.} The overall set of trainable parameters within $g_i$ is $\theta_{f_g} := (\theta_{e_g},\theta^i_{p_g})^{k}_{i=1}$. 
The hyper-discriminator is defined in a similar manner:
\begin{equation}
d_i(u,I) := d_i(u;f^i_d(I;\theta_{f_d}))
\end{equation}
where $f^i_d(I;\theta_{\theta_d})$ is a network that takes an image $I$ and returns a vector of weights for the $i$'th discriminator $d_i$. This network is also decomposed into an embedding network and a set of projections:
\begin{align}
    E_d(I) &:= e_d(I;\theta_{e_d})\\
    f^i_d(I;\theta_{f_d}) &:= p_d(E_g(I);\theta^i_{p_d})
\end{align}

In contrast to the generator, the hyper-discriminator works only on the last image scale.
Each $d_i$ is a fully-convolutional network, following Sec.~\ref{sec:single_image_generation}, whose weights are $\theta^i_d := f^i_d(I;\theta_{f_d})$. The overall set of trainable parameters within $d_i$ is $\theta_{f_d} := (\theta_{e_d},\theta^i_{p_d})^{k}_{i=1}$.

\subsection{Loss Functions} 

Our objective function is decomposed into an adversarial and reconstruction loss functions,
\begin{align}
\mathcal{L}(g_i,d_i) = &\mathcal{L}_{adv}(g_i,d_i) + \lambda_1 \cdot \mathcal{L}_{lip}(d_i) + \lambda_2 \cdot \mathcal{L}_{acc-rec}(g_i),
\end{align}
where $\lambda_1,\lambda_2 > 0$ are two tradeoff parameters. The loss functions are described below.

\noindent\textbf{Adversarial Loss Function \quad} Our adversarial loss function is defined in the following manner:
\begin{equation}
\begin{aligned}
\mathcal{L}_{adv}(g_i,d_i) := \mathbb{E}_I \left\{\mathop{\mathbb{E}}_{z_{1:i}} d_i(u_{\hat{I},i};f^i_d(I)) - \mathop{\mathbb{E}}_{u_{I,i}} d_i(u_{I,i} ;f^i_d(I)) \right\},
\end{aligned}
\end{equation}
which is maximized by $\theta_{f_d}$ and minimized by $\theta_{f_g}$. In order to suffice that $d_i$ would have a bounded Lipschitz constant, we apply the gradient penalty loss function:
\begin{equation}
\mathcal{L}_{lip}(d_i) := \mathbb{E}_{I} \mathbb{E}_{u_{I,i}} \left[\|\nabla_u d_i(u_{I,i}) \|^2_2 \right] 
\end{equation}
For a theoretical analysis of the sufficiency of these loss functions, see Sec.~\ref{sec:analysis-ours}.

\noindent\textbf{Reconstruction Loss Function \quad}
Our method also employs a similar loss function to the reconstruction loss defined in Sec.~\ref{sec:single_image_generation}. We accumulate all previous reconstruction losses for each scale: 
\begin{align}
    \mathcal{L}_{acc-rec}(g_i) := \mathbb{E}_i \mathcal{L}_{rec}(\hat{I}_i, I_i)
\end{align}
Previous methods~\cite{shaham2019singan,hinz2021improved,gur2020hierarchical} freeze each intermediate generator $g_i$ except for the current training scale, ensuring each $g_i$ to be independent
\footnote{In~\cite{hinz2021improved} they optimized each $g_i$ with its $j$ (constant) neighboring scales.}. 
In our case, we freeze the projection of all previous scales, except the current scale. However, because $e_g$ is shared for all projections, the accumulated reconstruction loss regularizes the training of $e_g$, by minimizing the reconstruction loss with freezed projections as well. We note that this accumulation is mostly needed for small datasets, where for large ones we simply compute the loss with respect to the last scale.

\subsection{Initialization and Optimization} 
We initialize the hypernetworks with the initialization suggested by~\cite{littwin2020optimization}. In this initialization, the network $f$ is initialized using the standard Kaiming He initialization~\cite{10.1109/ICCV.2015.123}. Each convolutional layer in the primary networks $g_i$ and $d_i$ has a $\frac{1}{\sqrt{c_{in} \cdot K \cdot K}}$ normalization, where $c_{in}$ is the number of input channels, $K \times K$ is the kernel size of the convolution layer.

{We progressively train the model, starting from scale $1$ onward to scale $k$. As noted, during training we freeze all previous projection layers, except for the current training scale. The networks $f_g$ and $f_d$ are continuously trained across scales, where for $f_g$ we add additional projection layers for each new scale, initialized by the previous scale, while $f_d$ does not change. Each scale is trained for a constant number of iterations, and optimized using the Adam~\cite{kingma2014adam} optimizer. Full training and experiments settings are presented in the appendix.}

\section{Theoretical Analysis}

\label{sec:analysis}
In this section, we analyze the soundness of our method, showing the sufficiency of our method.
In the appendix we show the importance of the hyper-discriminator and that the generator and discriminator should inherit their parameters from two disjoint hypernetworks. For simplicity, throughout the analysis we omit the reconstruction loss (i.e., $\lambda_2=0$), and assume that the distributions $\mathcal{D}_I$, $\mathcal{D}_{\hat{I},i}$ and $\mathcal{D}_{I,i}$ are supported by bounded sets.
Proof for each proposition are available in the appendix.

\subsection{Our Architecture}\label{sec:analysis-ours}

In general, we are interested in finding a hyper-generator $g_i$ for each scale, such that, for each image $I$, $g_i(\cdot,I)$ would generate samples $\hat{I}_i$ whose patches $u_{\hat{I},i}\sim \mathcal{D}_{\hat{I},i}$ are similar to the patches $u_{I,i}\sim \mathcal{D}_{I,i}$ within $I_i$. Specifically, we would like to train the parameters of $g_i$  to minimize the following function:
\begin{equation}
\begin{aligned}
&\mathbb{E}_I W_{\mathcal{C}}(\mathcal{D}_{\hat{I},i},\mathcal{D}_{I,i}) = \mathbb{E}_I \max_{d^I_i \in \mathcal{C}^1} \big\{\mathop{\mathbb{E}}_{z_{1:i}} d^I_i(u_{\hat{I},i}) - \mathop{\mathbb{E}}_{u_{I,i}} d^I_i(u_{I,i}) \big\},
\end{aligned}
\end{equation}
where $\mathcal{C}^{\alpha} := \mathcal{C} \cap \{d_i \mid \|d_i\|_{L} \leq \alpha\}$. However, to directly minimize this objective function, one needs to be able to hold a different discriminator $d^I_i$ for each sample $I$, which is computationally expensive.

Fortunately, we can think about this expression in a different manner, as the above expression can also be written as follows:
\begin{equation}\label{eq:S}
\begin{aligned}
&\mathbb{E}_I W_{\mathcal{C}}(\mathcal{D}_{\hat{I},i},\mathcal{D}_{I,i}) = \max_{S} \mathbb{E}_I\big\{\mathop{\mathbb{E}}_{z_{1:i}} d_i(u_{\hat{I},i};S(I)) - \mathop{\mathbb{E}}_{u_{I,i}} d_i(u_{I,i};S(I)) \big\},
\end{aligned}
\end{equation}
where the maximum is taken over the set of mappings $S$ from images $I$ to parameters $\theta_I$ of discriminators $d^I_i \in \mathcal{C}^1$. We let $S^*$ be a mapping that takes $I$ and returns the parameters $S^*(I)$ of the discriminator $d^I_i := d_i(\cdot;S^*(I)) = \arg\max_{d_i \in \mathcal{C}^1} \left\{\mathbb{E}_{z_{1:i}} d^I_i(u_{\hat{I},i}) - \mathbb{E}_{u_{I,i}} d^I_i(u_{I,i}) \right\}$. 

Therefore, if $S^*$ can be approximated by a large neural network $f^i_d(I) = f^i_d(I;\theta_{f_d}) \approx S^*(I)$, then, we can approximately solve the maximization in Eq.~\ref{eq:S} by parameterizing the discriminator with a hypernetwork $d_i := d_i(u;f_d(I;\theta_{f_d}))$ and training its parameters to (approximately) solve the maximization in Eq.~\ref{eq:S}. For instance if $S^*$ is a continuous function, one can approximate $S^*$ using a large enough neural network up to any approximation error $\leq \epsilon$~\cite{Cybenko1989,Hornik1991ApproximationCO,Mhaskar:1996:NNO:1362203.1362213,NIPS2017_7203,hanin2017approximating,10.5555/3327345.3327515,pmlr-v70-safran17a}. This is summarized in the following proposition. 

\begin{proposition} 
\label{prop:1}
Assume that $\mathcal{I} \subset \mathbb{R}^{3 \times h \times w}$ is compact. Let $\epsilon>0$ be an approximation error. Let $g_i(z,I) := g_i(z;f^i_g(I;\theta_{f_g}))$ be a hyper-generator and $\mathcal{C}$ a class of discriminators. Assume that $S^*$ is continuous over $\mathcal{I}$. Then, there is a large enough neural network $f^i_d$ (whose size depends on $\epsilon$), such that, the hyper-discriminator $d_i(u,I) := d_i(u;f^i_d(I;\theta_{f_d}))$ satisfies:
\begin{equation}
\begin{aligned}
&\mathbb{E}_I W_{\mathcal{C}}(\mathcal{D}_{\hat{I},i},\mathcal{D}_{I,i}) = \max_{\theta_{f_d}} \mathcal{L}_{adv}(g_i,d_i) + o_{\epsilon}(1),
\end{aligned}
\end{equation}
where the maximum is taken over the parameterizations $\theta_{f_d}$ of $f_{d}$, such that, $d_i(\cdot; f^i_d(I;\theta_{f_d})) \in \mathcal{C}^1$.
\end{proposition}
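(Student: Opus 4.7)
The plan is to start from the identity of Eq.~\ref{eq:S}, which already expresses $\mathbb{E}_I W_{\mathcal{C}}(\mathcal{D}_{\hat{I},i},\mathcal{D}_{I,i})$ as a maximum over all mappings $S$ from images to parameters of discriminators in $\mathcal{C}^1$, achieved by the continuous $S^*$. The proposition then reduces to showing that this infinite-dimensional optimizer can be realized, up to $o_\epsilon(1)$, by a finite hypernetwork $f^i_d$. The inequality $\max_{\theta_{f_d}} \mathcal{L}_{adv}(g_i,d_i) \leq \mathbb{E}_I W_{\mathcal{C}}(\mathcal{D}_{\hat{I},i},\mathcal{D}_{I,i})$ is free: for any admissible $\theta_{f_d}$, the map $I \mapsto f^i_d(I;\theta_{f_d})$ is itself a feasible $S$ in Eq.~\ref{eq:S}, so the constrained maximum is no larger than the unconstrained one. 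The substantive content is the reverse inequality up to $o_\epsilon(1)$.

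\textbf{Universal approximation and continuity transfer.} Since $S^* : \mathcal{I} \to \mathbb{R}^p$, with $p$ the number of parameters of $d_i$, is continuous on the compact set $\mathcal{I}$, the multi-output universal-approximation results cited in the paper (Cybenko, Hornik, Pinkus, \emph{et al.}) give, for every $\epsilon>0$, a neural network $f^i_d$ and parameters $\theta^*_{f_d}$ such that $\sup_{I\in\mathcal{I}} \|f^i_d(I;\theta^*_{f_d}) - S^*(I)\| \leq \epsilon$. Because the patches $u_{I,i}, u_{\hat{I},i}$ lie in bounded sets by assumption and $d_i(u;\theta)$ is a fixed-depth fully-convolutional network with continuous activations, the map $(u,\theta)\mapsto d_i(u;\theta)$ is uniformly continuous on every bounded subset of $(u,\theta)$-space; let $\omega(\cdot)$ denote its modulus of continuity in $\theta$, so that $\omega(\epsilon) \to 0$ as $\epsilon \to 0$. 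Uniformly in $I,u$, and $z_{1:i}$, this gives $|d_i(u;f^i_d(I;\theta^*_{f_d})) - d_i(u;S^*(I))| \leq \omega(\epsilon)$, which after integration against the adversarial expectations yields $|\mathcal{L}_{adv}(g_i,d_i) - \mathbb{E}_I W_{\mathcal{C}}(\mathcal{D}_{\hat{I},i},\mathcal{D}_{I,i})| \leq 2\omega(\epsilon) = o_\epsilon(1)$, provided $\theta^*_{f_d}$ is admissible.

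\textbf{Main obstacle: keeping the approximation inside $\mathcal{C}^1$.} The delicate step is precisely admissibility: a tiny perturbation of the primary discriminator's weights may push its Lipschitz constant slightly above $1$, so the witness $\theta^*_{f_d}$ from the previous paragraph might fall outside $\{\theta_{f_d} \mid d_i(\cdot;f^i_d(I;\theta_{f_d})) \in \mathcal{C}^1\}$. I would address this with a contraction trick carrying a safety margin $\delta=\delta(\epsilon)\to 0$. Define $\tilde{S}$ from $S^*$ by multiplying the weights and bias of the final linear layer of $d_i$ (as encoded in $S^*(I)$) by $1-\delta$; this scales both the discriminator output and its Lipschitz constant by $1-\delta$, so $\tilde{S}$ is continuous with image in $\mathcal{C}^{1-\delta}$, at a cost of at most $O(\delta)\cdot\mathbb{E}_I W_{\mathcal{C}}$ in the adversarial value. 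Applying universal approximation to $\tilde{S}$ with tolerance $\epsilon$ small enough that $C\epsilon < \delta$, where $C$ bounds the weight-sensitivity of the Lipschitz norm of $d_i$, gives a hypernetwork producing discriminators of Lipschitz constant at most $(1-\delta)+C\epsilon \leq 1$, so the resulting $\theta^*_{f_d}$ is now admissible. Choosing, for instance, $\delta=\sqrt{\epsilon}$ collapses both error contributions into a single $o_\epsilon(1)$ remainder and finishes the lower bound, completing the proposition.
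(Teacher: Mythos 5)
Your argument follows essentially the same route as the paper's proof: the easy direction via feasibility of $I\mapsto f^i_d(I;\theta_{f_d})$ as a particular $S$ in Eq.~\ref{eq:S}, then universal approximation of the continuous selector $S^*$ in $L_\infty$ over the compact $\mathcal{I}$, and finally uniform continuity of $(u,\theta)\mapsto d_i(u;\theta)$ on a product of compact sets to convert the parameter-space error into an $o_\epsilon(1)$ perturbation of the adversarial value. The one point where you go beyond the paper is the admissibility step: the paper's proof concludes by bounding the value at the approximant $\theta^*_{f_d}$ by the maximum over admissible parameterizations, which tacitly assumes that the approximating hypernetwork still outputs discriminators in $\mathcal{C}^1$ for every $I$ --- exactly the issue you flag. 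Your $(1-\delta)$-contraction of the top layer with $\delta=\sqrt{\epsilon}$ is a sound way to secure this and costs only a further $o_\epsilon(1)$; to make it fully rigorous you should state explicitly that $\mathcal{C}$ is closed under scaling of the linear top layer (true for the classes used here, and an assumption the paper itself invokes elsewhere) and that the Lipschitz constant of $d_i(\cdot;\theta)$, restricted to the bounded patch domain, depends continuously on $\theta$ over the relevant compact parameter set, which is what justifies your sensitivity constant $C$. In short, your proposal is correct, matches the paper's strategy, and is in one respect more careful than the published argument.
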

A proof for the existence of a continuous selector $S^*(I)$ has been proposed~\cite{galanti2020modularity,10.1006/jath.1998.3305,Maiorov99lowerbounds} for similar settings, and the proof for Prop.~\ref{prop:1} is provided as part of the supplementary material.
According to this proposition, in order to minimize $\mathbb{E}_I W_{\mathcal{C}}(\mathcal{D}_{\hat{I},i},\mathcal{D}_{I,i})$, we can simply parameterize our discriminator with a hypernetwork $d_i := d_i(u;f^i_d(I;\theta_{f_d}))$ and to train the hyper-generator $g_i$ to solve: $\min_{\theta_{f_g}} \max_{\theta_{f_d}} \mathcal{L}_{adv}(d_i,g_i)$ along with the gradient penalty loss $\mathcal{L}_{lip}(d_i)$ to ensure that $d_i(\cdot;f_d(I;\theta_{f_d}))$ would have a bounded Lipschitz constant. 

Differently said, in order to guarantee that the approximation error in Prop.~\ref{prop:1} would be small, we advocate {\bf selecting the hypernetwork $f^i_d$ to be a large neural network}. In this case, if we are able to effectively optimize $\theta_{f_g}$ and $\theta_{f_d}$ to solve $\min_{\theta_{f_g}} \max_{\theta_{f_d}} \mathcal{L}_{adv}(d_i,g_i)$ (s.t the Lipschitz constant of $d_i$ is bounded), we can ensure that $\mathbb{E}_I W_{\mathcal{C}}(\mathcal{D}_{\hat{I},i},\mathcal{D}_{I,i})$ would be small as desired.

\subsection{Alternative Architectures}
\label{sec:alt_arch}
As presented in Sec.~\ref{sec:method}, we consider two alternative architectures (i) shared discriminator and (ii) shared feature extractor. We briefly describe each proposed variant and its limitations, the full analysis is presented in the supplementary material.

\noindent\textbf{Shared Discriminator}\quad
In this case, the model has two main components for each scale $i$: a hyper-generator $g_i(z,I) = g_i(z;f^i_g(I;\theta_{f_g}))$ along with a standard discriminator $d_i(u) = d_i(u;\theta_d)$ that is shared among all samples $I$, as illustrated in Fig.~\ref{fig:our_arch}(b).
We show that if the expected (w.r.t $I \sim \mathcal{D}_I$) distance between the distributions $\mathcal{D}_{\hat{I},i}$ and $\mathcal{D}_{i}$ is small, then, the loss function $\mathcal{L}_{adv}(g_i,d_i) := \mathbb{E}_{I} \{\mathop{\mathbb{E}}_{z_{1:i}} d_i(u_{\hat{I},i}) - \mathop{\mathbb{E}}_{u_{i,I}} d_i(u_{I,i}) \}$ tends to be small. Here, $\mathcal{D}_{i}$ denotes the distribution of $u_{\hat{I},i}\sim \mathcal{D}_{\hat{I},i}$ for $I \sim \mathcal{D}_I$. This proposition shows that a hyper-generator $g_i(\cdot,I)$ that generates samples $\hat{I}_i$ whose patches are similar to samples of $ \mathcal{D}_{i}$ would minimize the loss function $\mathcal{L}_{adv}(g_i,d_i)$, even though the generated samples are not conditioned on the image $I$. 
Therefore, solely minimizing the adversarial loss would not guarantee that $g_i(\cdot,I)$ would generate samples $\hat{I}_i$ that are similar to $I_i$.

\noindent\textbf{Shared Feature Extractor}\quad
We note that as a strategy for reducing the number of trainable parameters in the whole model, one could restrict $f_g$ and $f_d$ to share their encoding component $e$, as illustrated in Fig.~\ref{fig:our_arch}(c). We show two failure cases of this approach. 
First, we consider the case where the model is trained using GD. In this case, GD iteratively updates $(\theta_e,\theta^i_{p_g})$ to minimize $\mathcal{L}_{adv}(g_i,d_i)$ and updates $(\theta_e,\theta^i_{p_d})$ to maximize $\mathcal{L}_{adv}(g_i,d_i) - \lambda_1 \cdot \mathcal{L}_{lip}(d_i)$. 
Informally, we show that $\theta_e$ is essentially trained to only minimize $\mathcal{L}_{lip}(d_i)$ and that each tuple $(\theta_e,\theta^i_{p_g},\theta^i_{p_d})$ with $d_i \equiv 0$ is an equilibrium point. 
In addition, we note that $\mathcal{L}_{lip}(d_i)$ is minimized by $d_i\equiv 0$. Therefore, it is likely that $d_i$ would converge to $0$ during training. Meaning, that at some point the discriminator is ineffective. In particular, if $\theta_e=0$, then, $(\theta_e,\theta^i_{p_g},\theta^i_{p_d})$ is an equilibrium point. We note that $\theta_e=0$ is not a desirable output of the training algorithm, since it provides a hyper-generator $g_i(\cdot,I)$ that is independent of the input image $I$.
Second, we consider the case where GD iteratively optimizes $(\theta_e,\theta^i_{p_g})$ to minimize $\mathcal{L}_{adv}(g_i,d_i)$, $\theta^i_{p_d}$ to maximize $\mathcal{L}_{adv}(g_i,d_i)$ and $(\theta_e,\theta^i_{p_g})$ to minimize the loss $\mathcal{L}_{lip}(d_i)$. We show that each tuple $(\theta_e,\theta^i_{p_g},\theta^i_{p_d})$ with $\theta_e=0$ is again, an equilibrium point.

\section{Experiments}

{Our experiments are divided into two parts. In the first part, we study three different training regimes of our method. First, we experiment with single-image training in order to produce a fair comparison to preexisting methods. Second, we present a mini-batch training scheme, where instead of a single image, the model is trained on a fixed set of images. Lastly, we experiment with training over a full dataset, that cannot fit into a single batch. 

In the second part, we experiment with several applications of our method. Specifically, we study the ability of our method in the Harmonization, Editing and Animation tasks proposed by~\cite{shaham2019singan}, {as well as generating samples of arbitrary size and aspect ratio}. In addition, we also experiment with two new applications: image interpolations, and generation at inference time. These application are unique to multi-image training.
 
Due to space constraints, we focus on our novel applications, and refer the reader to the appendix for our full set of applications, as well as technical details, such as, specific hyperparameters, GPU usage and additional experiments.}

{Throughout the experiments, we consider the following set of baselines}:  SinGAN~\cite{shaham2019singan}, ConSinGAN~\cite{hinz2021improved} and HP-VAE-GAN~\cite{gur2020hierarchical}. 
{\color{black}To evaluate image generation, we use the single-image FID metric (SIFID)~\cite{shaham2019singan}. Following~\cite{shaham2019singan}, the metric represents the mean of minimum SIFID over 100 generated samples per image. We further compute the mean-SIFID (mSIFID), which is the mean across all generated samples of all images, without taking the minimum for each image.}

As simply overfitting the training image would lead to a SIFID value of 0, a diversity measure is necessary. 
{For this purpose, we employ the diversity measure used in~\cite{shaham2019singan}.} This measure is computed as the averaged standard deviation over all pixel values along the channel axis of 150 generated images.

 Previous works in the field~\cite{shocher2018ingan,shaham2019singan} require training on each image independently. In order to compare with previous work, we use the 50-image dataset of~\cite{shaham2019singan}, denoted by Places-50 and the 50-image dataset of ~\cite{consingan}, denoted by LSUN-50. Additionally, whenever a quantitative measure is available, we present competitive results, and qualitatively, our results are at least as good, if not better than those of the single-image GANs. For larger datasets, that consist of up to 5000 images, we perform thorough experiments with our proposed method. 
The dataset presented by SinGAN, Places-50, consists of 50 images randomly picked from subcategories of the Places dataset~\cite{NIPS2014_3fe94a00} -- Mountains, Hills, Desert and Sky and the dataset presented by ConSinGAN, LSUN-50, consists of five randomly sampled images from each of the ten LSUN dataset categories. In order to evaluate our method on larger datasets, we consider three subsets of the {\em Valleys} category of the Places dataset; the first 500(V500), 2500(V2500) and 5000(V5000) (the entire category) images, and use the 100 images test-set when relevant. Additionally, we consider the first 250(C250) and 500(C500) images of the Churches Outdoor category of the LSUN dataset.

\begin{table}[t]
    \centering
    \caption{\textbf{Quantitative comparison on Places-50/LSUN-50}, showing SIFID, mSIFID, diversity and training time per image (minutes). Our method shows comparable results to single-image models in both {\em single} and {\em dataset} settings, where the overall training time per image is significantly lower.}
    \begin{tabular*}{\linewidth}{@{\extracolsep{\fill}}l@{~~}ccc@{~}c}
    \toprule
    Method & SIFID $\downarrow$ & mSIFID $\downarrow$ & Diversity$\uparrow$ & min./image$\downarrow$ \\
    \midrule
    SinGAN~\cite{shaham2019singan} & 0.09/0.11 & 0.15/0.20 & 0.52/0.60 &60\\
    ConSinGAN~\cite{hinz2021improved} & 0.06/0.08 & 0.08/0.13 & 0.50/0.55& 20\\
    HP-VAE-GAN~\cite{gur2020hierarchical} & 0.17/0.40 & 0.27/0.62 & 0.62/0.78 &60\\
    Ours {\em Single} & 0.03/0.11 & 0.06/0.19 & 0.57/0.65 & 30\\
    \midrule
    Ours {\em Dataset} & 0.05/0.11 & 0.07/0.16 & 0.50/0.48 & 5\\
    \bottomrule
    \end{tabular*}
    \label{tab:singan_test_set}
\end{table}

\begin{table}[t]
\begin{minipage}[c]{0.49\linewidth}
    \vspace{-15px}
    \captionof{table}{{\bf Varying the batch size in single mini-batch training.} Both SIFID and diversity (w.r.t a specific batch size) remain stable regardless of the size of the mini-batch.}
    \begin{tabular*}{\linewidth}{@{\extracolsep{\fill}}lccc}
    \toprule
    Batch Size & SIFID$\downarrow$ & mSIFID$\downarrow$ & Diversity$\uparrow$ \\
    \midrule
    1 & 0.03 & 0.07 & 0.73 \\
    2 & 0.04 & 0.07 & 0.66 \\
    3 & 0.03 & 0.07 & 0.68 \\
    4 & 0.04 & 0.08 & 0.70 \\
    5 & 0.04 & 0.08 & 0.71 \\
    \bottomrule
    \end{tabular*}
    \label{tab:batch_size}
\end{minipage}%
\hfill
\begin{minipage}[c]{0.49\linewidth}
    \setlength{\tabcolsep}{1pt} 
    \renewcommand{\arraystretch}{1} 
    \begin{tabular}{ccc}
    \includegraphics[width=0.32\linewidth]{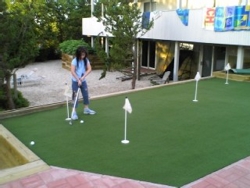}&
    \includegraphics[width=0.32\linewidth]{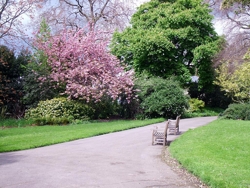}&
    \includegraphics[width=0.32\linewidth]{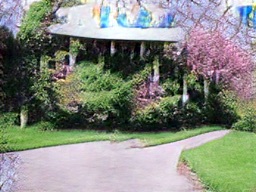}\\
    (a) & (b) & (c)
    \end{tabular}
    \captionof{figure}{{\bf Leakage in the multi-image training when using a shared discriminator.} {\bf(a)} and {\bf(b)} are the two training images, and {\bf(c)} is a generated image for the model of image {\bf(b)}. As can be seen, it contains patches from the first image as well.}
    \label{fig:leakage}
\end{minipage}
    \centering
    \caption{{\color{black}\textbf{Performance on the Valley dataset}, showing SIFID, mSIFID, diversity and train time per image (minutes). As can be seen, inference performance increases with training data size.}}
    \begin{tabular*}{\linewidth}{@{\extracolsep{\fill}}lccccccc}
    \toprule
    \multirow{2}{*}{Dataset} & \multicolumn{4}{c}{Train} & \multicolumn{3}{c}{Test}\\
    \cmidrule(lr){2-5}
    \cmidrule(lr){6-8}
     & SIFID$\downarrow$ & mSIFID$\downarrow$& Diversity$\uparrow$  & min./image$\downarrow$ & SIFID$\downarrow$ & mSIFID$\downarrow$ & Diversity$\uparrow$\\
    \midrule
     Valley$_{500}$ & 0.04 &0.07 & 0.51 &  4.0& 0.47&2.47 & 0.34\\
    Valley$_{2500}$ & 0.05 & 0.08  & 0.52 & 3.5 & 0.43  & 1.86  & 0.37  \\
    Valley$_{5000}$ & 0.05 & 0.08  & 0.51 &  3.0 & 0.41 & 1.52  & 0.40 \\
    \bottomrule
    \end{tabular*}
    \medskip
    \label{tab:datasets}
    \vspace{-10px}
\end{table}

\subsection{Training Procedures}

\noindent\textbf{Single-Image training \quad}
Our approach is first evaluated when training with a single image, as done in previous methods.
Since a single function needs to be learned, a standard discriminator (\ie, not learned via hypernetwork) is used in this specific case in order to avoid redundant enlargement of the model and speed up training. Similar results are obtained using a hyper-discriminator. 
Tab.~\ref{tab:singan_test_set} shows that our performance is on par with current single-image models on this setting. 

\noindent\textbf{Single mini-batch training \quad}
When introduced with multiples images, the standard discriminator, as for the baseline methods, suffers from leakage between the images in the mini-batch, \ie, the patches of the generated images are distributed as patches of arbitrary images from the batch (Sec.~\ref{sec:alt_arch}-- Shared Discriminator). Fig.~\ref{fig:leakage} illustrates this effect. To overcome this issue, we introduce a hyper-discriminator which allows efficiently to learn a different discriminator model per image. 
To evaluate performance on single mini-batch learning, we randomly sampled a set of 5 images from the 50 images dataset and trained a different model for each permutation of the set of size $1 \leq i \leq 5$. Tab.~\ref{tab:batch_size} show performance is good regardless of the mini-batch size, which indicates the hypernetwork model successfully learns a different model for each input image.

\begin{figure}
	\centering
	\includegraphics[width=\linewidth]{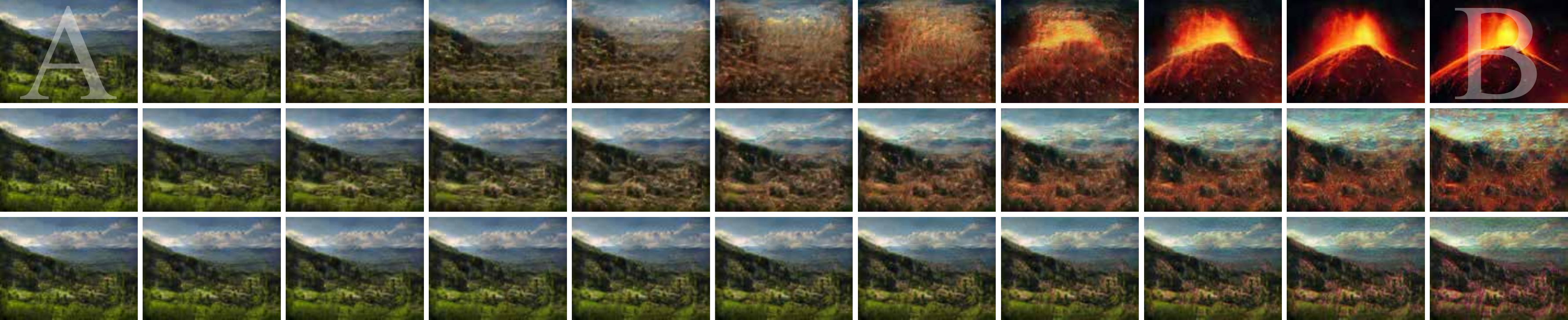}
	\captionof{figure}{\textbf{Interpolation in the space of generative networks}. A hypernetwork is trained to produce unique Single-Image Generators from a dataset of 50 images. \textbf{Top left (right)} - a generated image from generator A (B). Each column represents different mixtures of the generators' latent representations. Each row represents injection of the mixed representation at different scales, where all previous scales use generator A representation 
	- from coarsest {\bf (top)} to finest {\bf (bottom)}.}
	\label{fig:interpolation}
	\medskip

	\includegraphics[scale=0.3]{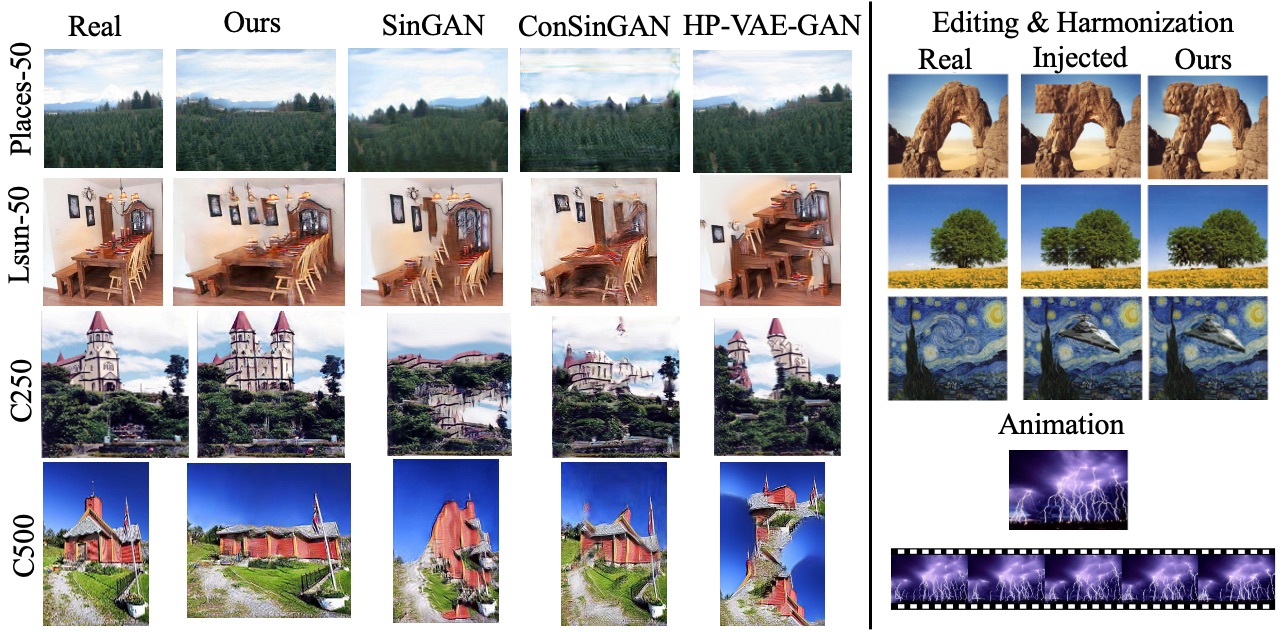}\captionof{figure}{{\color{black}\textbf{Left:} Comparison of image generation results with single image baselines on different datasets. \textbf{Right:} Results of applications, trained with the Places-50 dataset. Our method allows us to manipulate images such as Editing, Harmonization and Animation at a large scale, training all images at once.}}
    \label{fig:mainwbaselines}

    \setlength{\tabcolsep}{0.5pt} 
    \renewcommand{\arraystretch}{0.5} 
    \centering
    \small
    \begin{tabular}{ccccccc@{~~}cccc}
    \multicolumn{7}{c}{Train} & \multicolumn{4}{c}{Test}\\
    \cmidrule(lr){1-7}
    \cmidrule(lr){8-11}
    Real & \cite{shaham2019singan} & \cite{consingan} & \cite{gur2020hierarchical} & V500 & V2500 & V5000 & Real & V500 & V2500 & V5000\\
    \raisebox{11mm}{\multirow{1}{*}{\includegraphics[width=0.0875\linewidth]{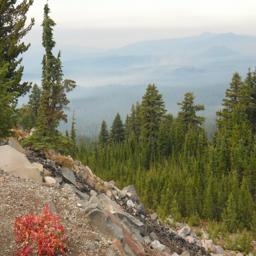}}}&
    \includegraphics[width=0.0875\linewidth]{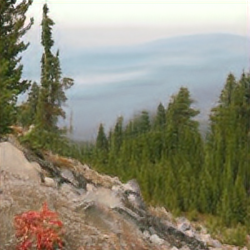}&
    \includegraphics[width=0.0875\linewidth]{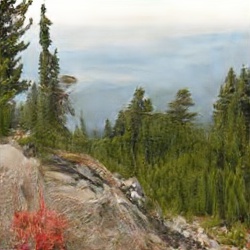}&
    \includegraphics[width=0.0875\linewidth]{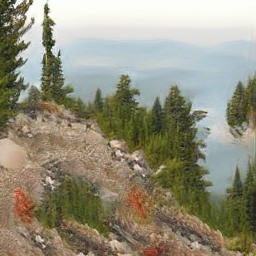}&
    \includegraphics[width=0.0875\linewidth]{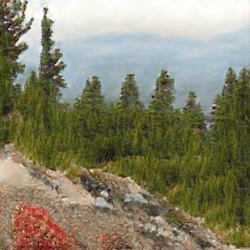}&
    \includegraphics[width=0.0875\linewidth]{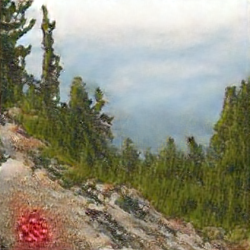}&
    \includegraphics[width=0.0875\linewidth]{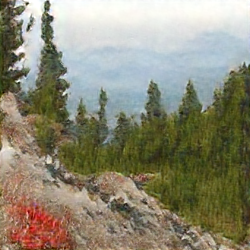}&
    \raisebox{11mm}{\multirow{1}{*}{\includegraphics[width=0.0875\linewidth]{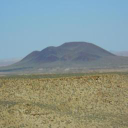}}}&

    \includegraphics[width=0.0875\linewidth]{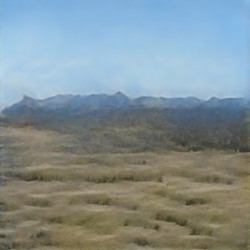}&
    \includegraphics[width=0.0875\linewidth]{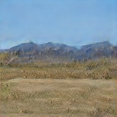}&
    \includegraphics[width=0.0875\linewidth]{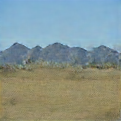}\\
    \end{tabular}
    	\captionof{figure}{{\color{black}\textbf{Training and testing results on Valley dataset}. Real images from the train/test-set respectively. Training results include SinGAN~\cite{shaham2019singan}, ConSinGAN~\cite{consingan} and HP-VAE-GAN~\cite{gur2020hierarchical}.}}
    \label{fig:inference}
    \vspace{-5px}
\end{figure}

\noindent\textbf{Dataset training\quad} 
Our main contribution arises from training with large amount of data. Beside Places-50 and LSUN-50, 
we trained our method on three subset of the Valley category, as presented above -- Valley$_{500}$, Valley$_{2500}$ and Valley$_{5000}$, iterating on batches of 16 images, for 25k, 100k, 150k iterations per scale, respectively, on a single GPU. Tab.~\ref{tab:singan_test_set} and~\ref{tab:datasets} shows performance and training time per image. Churches$_{250}$ and Churches$_{500}$ were trained in a similar manner for 20k and 30k iterations per scale respectively and reached equal performance of 0.20, 0.27 and 0.47 for SIFID, mSIFID and diversity metrics.

As far as we are aware, our method is the first to be able to train multiple single image models at this scale with a descent runtime. For example, training the model presented by \cite{shaham2019singan} on Valley$_{5000}$ with a single GPU would require the training of 5000 different and independent models, and would take approximately 200 days. Our method takes 10 days on a single GPU, and thus is faster by a factor of 20.

\subsection{Applications}
{As noted above, we present our novel applications in the main text, and refer the reader to the appendix for applications presented by previous work. }

\noindent\textbf{Interpolation\quad}
As our meta learning approach learns the space of generators, and is trained on multiple images, our model is able to interpolate between different images smoothly and at different scales.
In difference from common interpolation, the hierarchical structure of the model enables the interpolation in different scales as follows: We start by interpolating over the latent representation $e_g$, resulting in a new {\em generator}. Let $A$ and $B$ be two different images, we compute their latent representations $e^A_g = E_g(A)$ and $e^B_g = E_g(B)$ (resp.) and perform linear interpolation between the two, for $\alpha\in[0,1]$, $e^\alpha_g = \alpha e^A_g + (1 - \alpha)e^B_g$ resulting in a new generator.
We then select a primary image, $A$ for example, and initial scale $m$, and perform the following steps: (i) we use $e^A_g$ for the generation of scales $1$ to $m$, and (ii) from scale $m$ onward, we switch to $e^\alpha_g$, and continue the generation accordingly. The result is a mixing at different patch scales, where scale $1$ controls the structure of the image, and the last scale controls the finer texture of the image.
Fig.~\ref{fig:interpolation} shows an example of a pair of images and its interpolation, where the primary image is denoted by $A$ (top-left), and the target image by $B$ (top-right).
We show interpolations at three different scales - first (1), middle and last, presented from top to bottom. As can be seen, interpolating on the first scale results in more structural changes, while interpolating on middle and last scales results in a more textural changes. By changing $\alpha$ we are able to obtain a wide gamut of intermediate 
options between the two images.

\begin{figure}[t]
    \setlength{\tabcolsep}{0.5pt} 
    \renewcommand{\arraystretch}{0.5} 
    \centering
    \begin{tabular}{lcccc@{~~}ccccccc}
    & \multicolumn{4}{c}{Valley$_{5000}$} & MRI & Retina & Spec. & \multicolumn{2}{c}{Monet} & \multicolumn{2}{c}{Van Gogh}\\
    \raisebox{5mm}{\rotatebox[origin=c]{90}{\parbox{0.5cm}{\scriptsize \centering Input}}}&
    \includegraphics[width=0.085\linewidth]{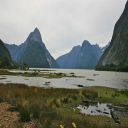}&
    \includegraphics[width=0.085\linewidth]{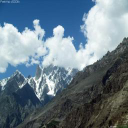}&
    \includegraphics[width=0.085\linewidth]{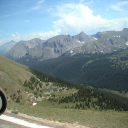}&
    \includegraphics[width=0.085\linewidth]{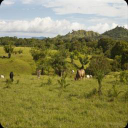}&
    \includegraphics[width=0.085\linewidth]{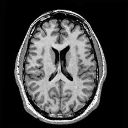}&
    \includegraphics[width=0.085\linewidth]{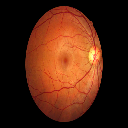}&
    \includegraphics[width=0.085\linewidth]{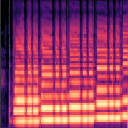}&
    \includegraphics[width=0.085\linewidth]{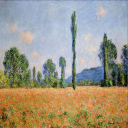}&
    \includegraphics[width=0.085\linewidth]{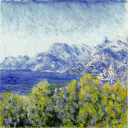}&
    \includegraphics[width=0.085\linewidth]{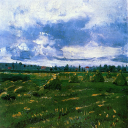}&
    \includegraphics[width=0.085\linewidth]{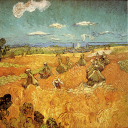}\\
    
    \raisebox{2mm}{\rotatebox[origin=c]{90}{\parbox{0.5cm}{\scriptsize \centering Generated}}}&
    \includegraphics[width=0.085\linewidth]{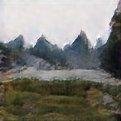}&
    \includegraphics[width=0.085\linewidth]{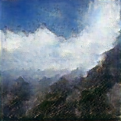}&
    \includegraphics[width=0.085\linewidth]{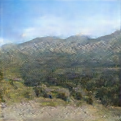}&
    \includegraphics[width=0.085\linewidth]{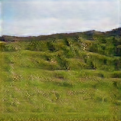}&
    \includegraphics[width=0.085\linewidth]{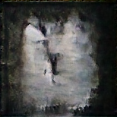}&
    \includegraphics[width=0.085\linewidth]{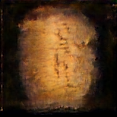}&
    \includegraphics[width=0.085\linewidth]{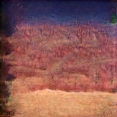}&
    \includegraphics[width=0.085\linewidth]{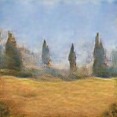}&
    \includegraphics[width=0.085\linewidth]{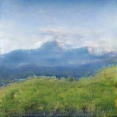}&
    \includegraphics[width=0.085\linewidth]{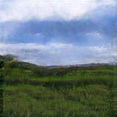}&
    \includegraphics[width=0.085\linewidth]{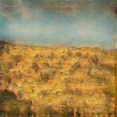}
    \end{tabular}

	\captionof{figure}{{\color{black} \textbf{Feedforward generation} with the Valley$_{5000}$ trained model on unseen images: \textbf{Left-side} - image form the same distribution. \textbf{Right-side} - images from completely different distributions.}}
    \label{fig:ood}
    
\end{figure}

\noindent\textbf{Feedforward generation\quad}
The meta learning approach, and the fact our method is able to learn from a relatively large dataset such as Valley$_{5000}$, introduce the ability to model a new image in one forward pass. 
Fig.~\ref{fig:inference} and Tab.~\ref{tab:datasets} show inference results of three different models trained on Valley$_{500}$,  Valley$_{2500}$ and  Valley$_{5000}$. As can be seen, it requires a significantly larger dataset than that of~\cite{shaham2019singan} to get the model to generalize. The network trained on Valley$_{5000}$ enables modeling of a new image in a fraction of a second, and results in coherent and reasonable generated images, compared to previous works, which are unable to perform this task.

\section{Limitations}

A prominent limitation of the method is the model's size. The hypernetwork approach induce large size projection layers, \eg a convolution layer with weight size of $(64, 64, 3, 3)$ and embedding size of $|e| = 512$ will result in a projection layer with weights size of $|(512, 36864)|\approx 18$M parameters. This obviously affect convergence, runtime and GPU memory usage.

In Fig.~\ref{fig:ood}, we quantitatively explore the  out-of-distribution generalization capabilities of our feedforward method when training on the Valley$_{5000}$ nature image dataset. As can be seen, for images that are completely out of domain, the generated images are not faithful to the input image. Training on a large-scale heterogeneous dataset to further improve generalization requires days of training. Until this experiment is performed, it is unclear whether the architecture has enough capacity to support this one-model-fits-all capability.

\section{Related work}
Hypernetworks, which were first introduced under this name in~\cite{ha2016hypernetworks}, are networks that generate the weights of a second {\em primary} network that computes the actual task. Hypernetworks are especially suited for meta-learning tasks, such as few-shot~\cite{bertinetto2016learning} and continual learning tasks~\cite{Oswald2020Continual}, due to the knowledge sharing ability of the weights generating network. Knowledge sharing in hypernetworks was recently used for continual learning by~\cite{Oswald2020Continual}.

Predicting the weights instead of performing backpropagation can lead to efficient neural architecture search~\cite{brock2018smash,zhang2018graph}, and hyperparameter selection~\cite{lorraine2018stochastic}. In~\cite{Littwin_2019_ICCV}, hypernetworks were applied for 3D shape reconstruction from a single image. In~\cite{sitzmann2020implicit} hypernetworks were shown to be useful for learning shared image representations. 
Note that while the name of the method introduced in~\cite{pmlr-v97-ratzlaff19a} is reminiscent of our method, it solves a different task with a completely different algorithm. Their method does not employ a hypernetwork to parameterize their generator (or discriminator), rather their generator serves as a hypernetwork itself. In addition, they intend to learn the distribution of weights of high-performing classifiers on a given classification task, which is a different application.

Several GAN-based approaches were proposed for learning from a single image sample. 
Deep Image Prior~\cite{DIP} and Deep Internal Learning~\cite{shocher2018zero}, showed that a deep convolutional network can form a useful prior for a single image in the context of denoising, super-resolution, and inpainting.  SinGAN~\cite{rottshaham2019singan} uses patch-GAN~\cite{pix2pix, rottshaham2019singan, markov1, markov2} to model the multiscale internal patch distribution of a single image, thus generating novel samples. ConSinGAN~\cite{consingan} extends SinGAN, improving the quality and train time. However, these methods need to be trained on each image individually. In this work, we propose a novel approach based on hypernetworks that leverages the capabilities of single-image generation and enables efficient training on an arbitrary sized dataset while keeping the unique properties of single-image training.

\section{Conclusions}
Given the abundance of unlabeled training images, training a single image GAN is unjustifiable, if viable multi-image alternatives exist. We present the first such alternative, which also opens the door to novel applications that are not possible with the existing models, such as the interpolation between single-image domains and feedforward modeling. From a technical perspective, we present the first, to our knowledge, adversarial hypernetwork. Working with this novel multi-network structure requires an understanding of the interplay between the involved components, and we support our method by a theoretical analysis.

\section*{Acknowledgments}
This project has received funding from the European Research Council (ERC) under the European Unions Horizon 2020 research and innovation programme (grant ERC CoG 725974). The contribution of the first author is part of a Master thesis research conducted at Tel Aviv University.

\bibliographystyle{abbrv}
\bibliography{hyper_bib}

\clearpage

\appendix

\addcontentsline{toc}{section}{Appendix} 
\part{Appendix} 
\begin{center}
	\includegraphics[scale=0.3]{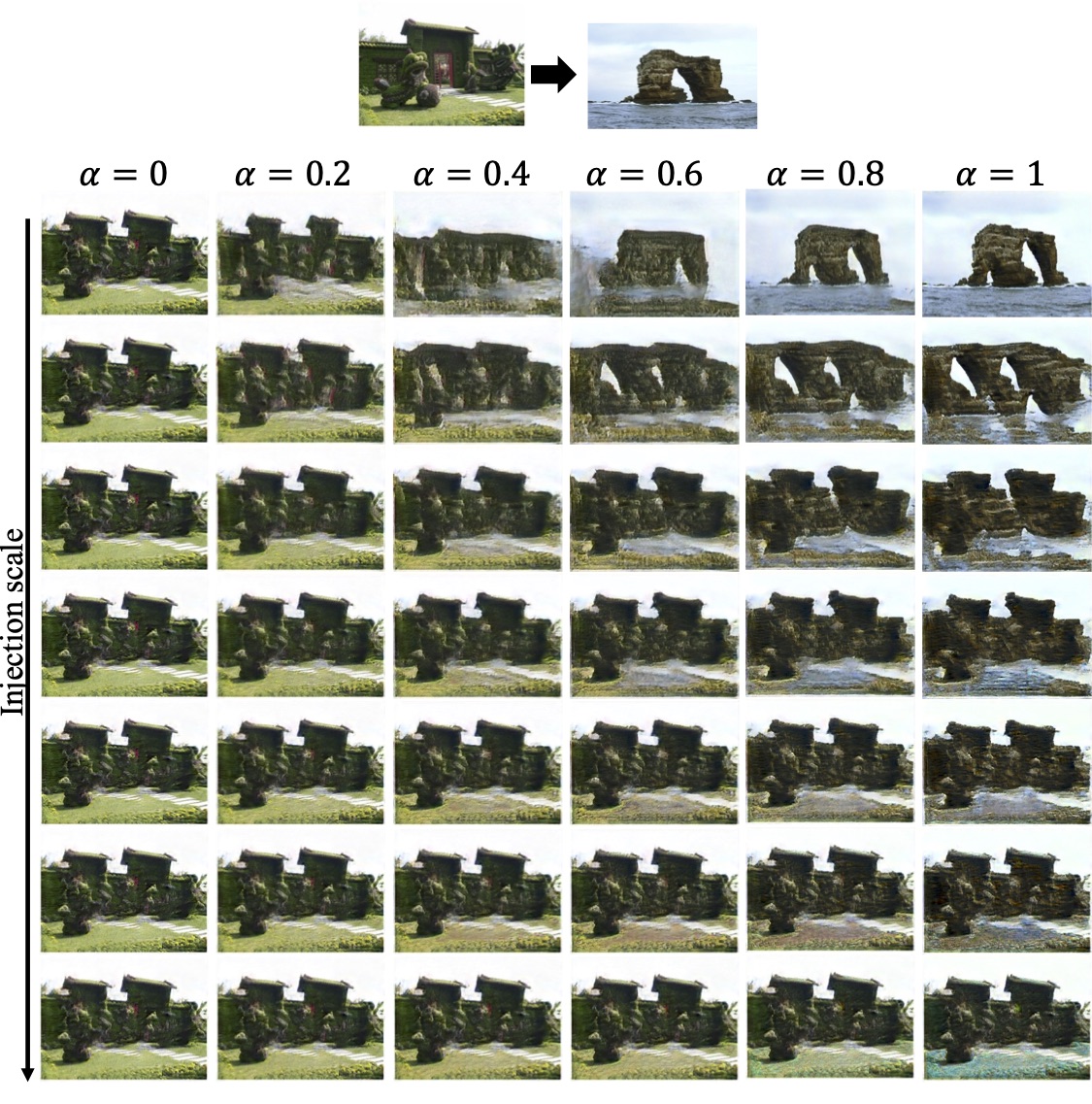}
\parttoc 
\end{center}

Due to file size constraints, all the figures in this manuscript were compressed to a lower resolution. For full resolution figures, please refer to the project site : \url{https://RaphaelBensTAU.github.io/MetaInternalLearning/}

\section{Theoretical Analysis}

\subsection{Shared Discriminator}
\label{sec:shared-disc}

In this section, we consider the case where the model has two main components: a hyper-generator $g_i(z,I) = g_i(z;f^i_g(I;\theta_{f_g}))$ along with a standard discriminator $d_i(u) = d_i(u;\theta_d)$ that is shared among all samples $I$, as illustrated in Fig.~\ref{fig:leakage}. In this setting, the adversarial objective function is defined as:
\begin{equation}
\begin{aligned}
\mathcal{L}_{adv}(g_i,d_i) := \mathbb{E}_{I} \big\{\mathop{\mathbb{E}}_{z_{1:i}} d_i(u_{\hat{I},i}) - \mathop{\mathbb{E}}_{u_{i,I}} d_i(u_{I,i}) \big\}
\end{aligned}
\end{equation}
The following proposition shows that if the expected (with respect to the distribution of $I$) distance between the distributions $\mathcal{D}_{\hat{I},i}$ and $\mathcal{D}_{i}$ is small, then, the loss function $\mathcal{L}_{adv}(g_i,d_i)$ tends to be small. For this purpose, we assume that $\mathcal{C}$ is closed under multiplication by positive scalars (i.e., $\alpha d \in \mathcal{C}$ for all $d\in \mathcal{C}$ and $\alpha > 0$). This is a technical assumption that holds for any set of neural networks, with a linear top-layer. 

\begin{proposition}
\label{prop:2}
Let $g_i(z,I) := g_i(z;f^i_g(I;\theta_{f_g}))$ be a hyper-generator and $d_i \in \mathcal{C}$ a shared discriminator at scale $i$. Let $\mathcal{D}_{i}$ be the distribution of $u \sim \mathcal{D}_{I,i}$, where $I \sim \mathcal{D}_I$. Assume that $\mathcal{C}$ is closed under multiplication by positive scalars. Then,
\begin{equation}
\mathcal{L}_{adv}(g_i,d_i) \leq \|d_i\|_{L} \cdot \mathbb{E}_I [W_{\mathcal{C}}(\mathcal{D}_{\hat{I},i}, \mathcal{D}_{i})]
\end{equation}
In particular, $\max_{d_i \in \mathcal{C}^1}\mathcal{L}_{adv}(g_i,d_i) \leq \mathbb{E}_I [W_{\mathcal{C}}(\mathcal{D}_{\hat{I},i}, \mathcal{D}_{i})]$.
\end{proposition}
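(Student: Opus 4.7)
The plan is to reduce the bound to a per-image application of the Kantorovich-Rubinstein duality already encoded in the definition of $W_{\mathcal{C}}$. The key observation is that if we let $\mathcal{D}_i$ denote the mixture $\mathbb{E}_I \mathcal{D}_{I,i}$, then by Fubini/the tower property we have $\mathbb{E}_I \mathbb{E}_{u \sim \mathcal{D}_{I,i}} d_i(u) = \mathbb{E}_{u \sim \mathcal{D}_i} d_i(u)$. This lets us rewrite the real-patch term in $\mathcal{L}_{adv}$ so that the discriminator is being compared against the same marginal distribution $\mathcal{D}_i$ for every image $I$.

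Concretely, I would first subtract and add $\mathbb{E}_{u \sim \mathcal{D}_i} d_i(u)$ inside the outer expectation over $I$. Since this quantity does not depend on $I$, the term $\mathbb{E}_I[\mathbb{E}_{u \sim \mathcal{D}_i} d_i(u) - \mathbb{E}_{u \sim \mathcal{D}_{I,i}} d_i(u)]$ equals zero by the identity above, and one obtains
\begin{equation}
\mathcal{L}_{adv}(g_i,d_i) = \mathbb{E}_I\bigl[\mathbb{E}_{u \sim \mathcal{D}_{\hat I,i}} d_i(u) - \mathbb{E}_{u \sim \mathcal{D}_i} d_i(u)\bigr].
\end{equation}
This is exactly the kind of expression that the Wasserstein dual estimates, except that $d_i$ might have Lipschitz norm larger than $1$.

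Next I would normalize. Assume first $\|d_i\|_L > 0$ and set $\tilde d_i := d_i / \|d_i\|_L$, which has Lipschitz norm $1$; by the hypothesis that $\mathcal{C}$ is closed under positive scalar multiplication, $\tilde d_i \in \mathcal{C}$, so $\tilde d_i \in \mathcal{C}^1$. For each fixed $I$ the definition of $W_{\mathcal{C}}$ as a supremum over $\mathcal{C}^1$ then gives
\begin{equation}
\mathbb{E}_{u \sim \mathcal{D}_{\hat I,i}} \tilde d_i(u) - \mathbb{E}_{u \sim \mathcal{D}_i} \tilde d_i(u) \;\le\; W_{\mathcal{C}}(\mathcal{D}_{\hat I,i}, \mathcal{D}_i).
\end{equation}
Multiplying through by $\|d_i\|_L$ and taking expectation over $I \sim \mathcal{D}_I$ yields the first inequality of the proposition. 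The degenerate case $\|d_i\|_L = 0$ corresponds to a constant $d_i$, in which case both sides are trivially $0$. The ``In particular'' clause is then immediate: if $d_i \in \mathcal{C}^1$ then $\|d_i\|_L \le 1$, so $\mathcal{L}_{adv}(g_i,d_i) \le \mathbb{E}_I W_{\mathcal{C}}(\mathcal{D}_{\hat I,i}, \mathcal{D}_i)$, and taking the maximum over $d_i \in \mathcal{C}^1$ on the left preserves the inequality.

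The only subtle step is the cancellation via the tower property, which is what makes the shared discriminator only see the aggregate real distribution $\mathcal{D}_i$ rather than the image-specific $\mathcal{D}_{I,i}$, and the mild technical check that $d_i/\|d_i\|_L$ lies in the admissible class $\mathcal{C}^1$, which is precisely why closure under positive scalars is assumed. No approximation or universal-approximation argument is needed here, in contrast to Proposition~\ref{prop:1}; the bound is obtained purely from definitions and linearity of expectation.
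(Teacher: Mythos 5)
Your proof is correct and follows essentially the same route as the paper's: the tower property collapses the real-patch term to the aggregate distribution $\mathcal{D}_i$, and closure under positive scalar multiplication lets you rescale to a unit-Lipschitz discriminator whose per-image gap is bounded by $W_{\mathcal{C}}(\mathcal{D}_{\hat I,i},\mathcal{D}_i)$. Your version is in fact slightly more direct, since you bound the given $d_i$ via its normalization $d_i/\|d_i\|_L$ rather than first passing to a maximum over $\mathcal{C}^{\alpha}$ and then swapping that maximum with the expectation over $I$, as the paper does.
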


\begin{proof}
Let $\alpha := \|d_i\|_L$, $u_{I,i} \sim \mathcal{D}_{I,i}$ (conditioned on a fixed $I$) and let $u \sim \mathcal{D}_{i}$ be a random variable. We can write: 
\begin{equation*}
\begin{aligned}
\mathcal{L}(g_i,d_i) \leq& \max_{d'_i\in \mathcal{C}^{\alpha}} \mathcal{L}(g_i,d'_i)\\
=& \max_{d'_i\in \mathcal{C}^{\alpha}} \mathop{\mathbb{E}}_{I} \left\{\mathop{\mathbb{E}}_{z_{1:i}} d'_i(u_{\hat{I},i}) - \mathop{\mathbb{E}}_{u_{I,i}} d'_i(u_{I,i}) \right\} \\
=& \max_{d'_i\in \mathcal{C}^{\alpha}} \left\{\mathop{\mathbb{E}}_{I_1}\mathop{\mathbb{E}}_{z_{1:i}} d'_i(u_{\hat{I},i}) - \mathop{\mathbb{E}}_{I_2}\mathop{\mathbb{E}}_{u_{I,i}} d'_i(u_{I,i}) \right\} \\
=& \max_{d'_i\in \mathcal{C}^{\alpha}} \left\{\mathop{\mathbb{E}}_{I_1}\mathop{\mathbb{E}}_{z_{1:i}} d'_i(u_{\hat{I},i}) - \mathop{\mathbb{E}}_{u} d'_i(u) \right\} \\
=& \max_{d'_i\in \mathcal{C}^{\alpha}} \mathop{\mathbb{E}}_{I}\left\{\mathop{\mathbb{E}}_{z_{1:i}} d'_i(u_{\hat{I},i}) - \mathop{\mathbb{E}}_{u} d'_i(u) \right\}, \\
\end{aligned}
\end{equation*}
where $I_1,I_2 \sim \mathcal{D}_I$ are two i.i.d. random variables. We note that for any real-valued function $f$, we have: $\max_{x} \mathbb{E}_y[f(x,y)] \leq \mathbb{E}_{y}[\max_x f(x,y)]$.  Therefore, 
\begin{equation}
\begin{aligned}
\mathcal{L}(g_i,d_i) \leq&  \mathop{\mathbb{E}}_{I} \max_{d'_i \in \mathcal{C}^{\alpha}} \left\{ \mathop{\mathbb{E}}_{z_{1:i}} d'_i(u_{\hat{I},i}) - \mathop{\mathbb{E}}_{u} d'_i(u) \right\} \\
\end{aligned}
\end{equation}
We note that any function $d \in \mathcal{C}^1$ can be translated into a function $\alpha d \in \mathcal{C}^{\alpha}$ and vice versa since $\mathcal{C} = \alpha \cdot \mathcal{C}$. In particular, $\mathcal{C}^{\alpha} = \alpha \cdot \mathcal{C}^1$. Hence, we have:
\begin{equation}
\begin{aligned}
\mathcal{L}(g_i,d_i) 
\leq&  \mathop{\mathbb{E}}_{I} \max_{d'_i \in \mathcal{C}^{1}} \left\{ \mathop{\mathbb{E}}_{z_{1:i}} \alpha \cdot d'_i(u_{\hat{I},i}) - \mathop{\mathbb{E}}_{u} \alpha \cdot d'_i(u) \right\} \\
=& \mathop{\mathbb{E}}_{I} [\alpha \cdot W_{\mathcal{C}}(\mathcal{D}_{\hat{I},i},\mathcal{D}_{i})] \\
=& \alpha \mathbb{E}_{I} [ W(\mathcal{D}_{\hat{I},i},\mathcal{D}_{i})],
\end{aligned}
\end{equation}
which proves the claim.
\end{proof}

This proposition shows that a hyper-generator $g_i(\cdot,I)$ that generates samples $\hat{I}_i$ whose patches are similar to $\mathcal{D}_{i}$ would minimize the loss function $\mathcal{L}_{adv}(g_i,d_i)$, even though the generated samples are not conditioned on the image $I$. Therefore, minimizing the adversarial loss with a shared discriminator does not guarantee that $g_i(\cdot,I)$ would generate samples $\hat{I}_i$ that are similar to $I_i$, which is undesirable.

\subsection{Shared Feature Extractor}
\label{sec:feature}

We note that as a strategy one could reduce the number of trainable parameters in the whole model, by restricting $f_g$ and $f_d$ to share their encoding component $e$, as illustrated in Fig.~\ref{fig:shared_arch}. In this section, we show two failing cases of this approach. First, we consider the case where $\theta_e$ is optimized to minimize the objectives of both $g$ and $d$. As a second case, we consider the case where $\theta_e$ is optimized to minimize the objective of $g$.

\noindent\textbf{Case 1\quad} We first consider the case where the model is trained using GD, when $f_g$ and $f_d$ share their representation function's weights. Specifically, GD iteratively updates $(\theta_e,\theta^i_{p_g})$ to minimize $\mathcal{L}_{adv}(g_i,d_i)$ and updates $(\theta_e,\theta^i_{p_d})$ to maximize $\mathcal{L}_{adv}(g_i,d_i) - \lambda_1 \cdot \mathcal{L}_{lip}(d_i)$. We denote this optimization process by $\mathbb{A}$. The following proposition shows that $\theta_e$ is trained to minimize $\mathcal{L}_{lip}(d_i)$ only and that $\mathbb{A}$ suffers from a wide span of undesirable equilibrium points. 

\begin{proposition} Let $g_i(z,I) := g_i(z;f^i_g(I;\theta_{f_g}))$ and $d_i(u,I) := d_i(u;f^i_d(I;\theta_{f_d}))$ be the hyper-generator and the hyper-discriminator, with an activation function $\sigma$ that satisfies $\sigma(0)=0$. Assume that $\theta_{e_g} = \theta_{e_d}$ is shared among  $f^i_g$ and $f^i_d$. Then, $\mathbb{A}$ trains $e_g=e_d$ to minimize $\mathcal{L}_{lip}(d_i)$ only. In addition, let $(\theta_e,\theta_{p_g},\theta_{p_d})$ be a set of parameters with $E_g = E_d \equiv 0$. Then, $(\theta_e,\theta_{p_g},\theta_{p_d})$ is an equilibrium point of $\mathbb{A}$.
\end{proposition}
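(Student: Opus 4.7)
The plan is to verify both claims by directly manipulating the gradient updates that $\mathbb{A}$ applies to the shared encoder parameter $\theta_e = \theta_{e_g} = \theta_{e_d}$, combined with the structural properties of the primary networks when driven by all-zero parameters.

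For the first claim, one outer round of $\mathbb{A}$ applies two sub-steps to $\theta_e$: the generator sub-step subtracts $\eta \nabla_{\theta_e} \mathcal{L}_{adv}(g_i, d_i)$, while the discriminator sub-step adds $\eta \nabla_{\theta_e} [\mathcal{L}_{adv}(g_i, d_i) - \lambda_1 \mathcal{L}_{lip}(d_i)]$. Summed to first order in $\eta$, the two $\nabla_{\theta_e} \mathcal{L}_{adv}$ contributions cancel identically, leaving the net update $-\eta \lambda_1 \nabla_{\theta_e} \mathcal{L}_{lip}(d_i)$. Hence, across one outer iteration, the combined update on $\theta_e$ is pure gradient descent on $\mathcal{L}_{lip}$, which is the first claim.

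For the second claim, I will show that every parameter gradient vanishes at any $(\theta_e, \theta_{p_g}, \theta_{p_d})$ satisfying $E_g \equiv E_d \equiv 0$. Since each projection $p^i_g$ and $p^i_d$ is linear in its first argument, $E_g(I) = 0$ forces $f^i_g(I) = p^i_g(0; \theta^i_{p_g}) = 0$ and $f^i_d(I) = p^i_d(0; \theta^i_{p_d}) = 0$ for every $I$, so the primary networks $g_i(\cdot;0)$ and $d_i(\cdot;0)$ operate with all-zero weights. Combined with $\sigma(0)=0$, a short induction on depth yields $g_i(z;0) \equiv 0$, $d_i(u;0) \equiv 0$, $\nabla_u d_i(\cdot;0) \equiv 0$, and $\nabla_{\theta_d} d_i(\cdot;0) \equiv 0$, since every backpropagation factor contains a zero weight matrix. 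Given these ingredients, the chain rule collapses every remaining gradient: derivatives with respect to $\theta^i_{p_g}$ or $\theta^i_{p_d}$ factor through $\partial p^i_g(0;\theta^i_{p_g})/\partial \theta^i_{p_g}$ or $\partial p^i_d(0;\theta^i_{p_d})/\partial \theta^i_{p_d}$, each of which is the zero map because $p^i_g(0;\cdot)$ and $p^i_d(0;\cdot)$ are identically the zero function of their weights. For $\theta_e$, the first claim reduces the net effective update to $-\eta \lambda_1 \nabla_{\theta_e} \mathcal{L}_{lip}(d_i)$, which is zero at this point because $\mathcal{L}_{lip}$ is the expected squared norm of $\nabla_u d_i(\cdot;0) \equiv 0$, a quantity that vanishes identically together with its parameter-gradients. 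Consequently no parameter receives a nonzero update, so $(\theta_e, \theta_{p_g}, \theta_{p_d})$ is an equilibrium of $\mathbb{A}$.

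The main obstacle is the chain-rule bookkeeping: the shared encoder sits at the intersection of both the generator and discriminator pathways, and three distinct loss contributions (the $g$- and $d$-sides of $\mathcal{L}_{adv}$, and the $\mathcal{L}_{lip}$ penalty) must be tracked through the composition $e \mapsto p \mapsto \{g_i, d_i\}$. The cleanest way to avoid tedious per-layer calculation is to classify every path from a trainable parameter to a loss value as passing through either a linear projection evaluated at zero, whose derivative with respect to its own weights is the zero map, or a primary discriminator whose input- and parameter-gradients are identically zero. Once this classification is in place, every gradient component vanishes and both claims follow with minimal calculation.
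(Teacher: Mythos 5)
Your proposal is correct and follows essentially the same route as the paper: the adversarial gradient contributions to the shared encoder from the generator and discriminator sub-steps cancel, leaving only the gradient-penalty term, and at $E_g = E_d \equiv 0$ all primary-network outputs vanish identically in the projection weights (using $\sigma(0)=0$ and the linearity of the projections), so every gradient is zero. Your version is somewhat more explicit about the chain-rule bookkeeping (the zero map $\partial p^i(0;\cdot)/\partial\theta$ and the vanishing of $\nabla_u d_i$), but this only fills in details the paper leaves implicit rather than changing the argument.
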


\begin{proof}
We denote $e=e_g=e_d$. Let $\theta_{e}$, $\theta^i_{p_g}$ and $\theta^i_{p_d}$ be the parameters of $e$, $p_g$ and $p^i_d$. 
Each iteration of GD updates the weights $(\theta_{e},\theta^i_{p_d})$ of $d_i$ with the following step: $-\mu  \frac{\partial\mathcal{L}_{adv}(g_i,d_i)}{\partial (\theta_{e},\theta^i_{p_d})} + \mu \frac{\partial \mathcal{L}_{lip}(d_i)}{\partial (\theta_{e},\theta^i_{p_d})}$. On the other hand, the GD step for the weights $(\theta_{e},\theta^i_{p_g})$ of $g_i$ is $+\mu  \frac{\partial\mathcal{L}_{adv}(g_i,d_i)}{\partial (\theta_{e},\theta^i_{p_d})}$. Therefore, since $d_i$ and $g_i$ share weights within their representation function $e$, its update is the sum of the two steps $-\mu  \frac{\partial\mathcal{L}_{adv}(g_i,d_i)}{\partial \theta_{e}}$ and $+\mu  \frac{\partial\mathcal{L}(g_i,d_i)}{\partial \theta_{e}}$ and $-\mu \frac{\partial \mathcal{L}_{lip}(d_i)}{\partial \theta_{e}}$, which is simply $-\mu \frac{\partial \mathcal{L}_{lip}(d_i)}{\partial \theta_{e}}$. Therefore, $e$ is trained to minimize $\mathcal{L}_{lip}(d_i)$ using GD.

To see why $(\theta_e,\theta^i_{p_g},\theta^i_{p_d})$ (with $E_g\equiv 0$) is an equilibrium point, we notice that $d_i\equiv 0$ is a global minima of $\mathcal{L}_{lip}(d_i)$. In particular, $\theta_{e}$ would not change when applying $\mathbb{A}$. In addition, we note that if $E_g(I) = E_d(I) = 0$, then, the outputs of $E_g$, $E_d$, $f^i_g$, $f^i_d$, $g_i$ and $d_i$ are all zero, regardless of the values of the weights $\theta^i_{p_g},\theta^i_{p_d}$, because $\sigma(0)=0$. Therefore, the gradients of $\mathcal{L}_{adv}(g_i,d_i)$ with respect to $\theta^i_{p_g}$ and $\theta^i_{p_d}$ are zero, and we conclude that $\theta^i_{p_g}$ and $\theta^i_{p_d}$ would not update as well. 
\end{proof}

\noindent\textbf{Case 2 \quad} As an additional investigation, we consider the case where GD iteratively optimizes $(\theta_e,\theta^i_{p_g})$ to minimize $\mathcal{L}_{adv}(g_i,d_i)$, $\theta^i_{p_d}$ to maximize $\mathcal{L}_{adv}(g_i,d_i)$ and $(\theta_e,\theta^i_{p_g})$ to minimize the loss $\mathcal{L}_{lip}(d_i)$. We denote this optimization process by $\mathbb{B}$. The following proposition shows that this procedure suffers from a wide span of undesirable equilibrium points.  

\begin{proposition}
Let $g_i(z,I) := g_i(z;f^i_g(I;\theta_{f_g}))$ and $d_i(u,I) := d_i(u;f^i_d(I;\theta_{f_d}))$ be a hyper-generator and a hyper-discriminator, both with activation functions $\sigma$ that satisfy $\sigma(0)=0$. Then, any set of parameters $(\theta_e=0,\theta_{p_g},\theta_{p_d})$ is an equilibrium point of $\mathbb{B}$.
\end{proposition}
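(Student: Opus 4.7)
My plan is to follow the same chain-of-zeros propagation used in Case~1 and then separately verify that each of the three gradient directions employed by $\mathbb{B}$ collapses to zero at the claimed equilibrium. The crucial observation is that $\theta_e=0$ is a strictly stronger hypothesis than $E_g=E_d\equiv 0$, so it reproduces the zero outputs at every stage of both hypernetworks while additionally constraining the embedding weights themselves.

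First I would note that $\theta_e=0$, combined with $\sigma(0)=0$, forces $E_g(I)=E_d(I)=0$ for every image $I$: each hidden pre-activation in $e_g$ and $e_d$ is a linear combination of zeros, so its activation is $\sigma(0)=0$, and induction on depth finishes the claim. Because $p^i_g$ and $p^i_d$ are linear projections of the embedding, we then have $f^i_g(I;\theta_{f_g})=p^i_g(0;\theta^i_{p_g})=0$ and $f^i_d(I;\theta_{f_d})=p^i_d(0;\theta^i_{p_d})=0$, regardless of $\theta^i_{p_g},\theta^i_{p_d}$. Hence both primary networks receive the all-zero weight vector, and using $\sigma(0)=0$ once more, $g_i(\cdot;0)\equiv 0$ and $d_i(\cdot;0)\equiv 0$ on all inputs. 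This places us in exactly the situation analyzed at the end of the Case~1 proof.

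Second, I would verify that all three gradient directions of $\mathbb{B}$ vanish. For the projection weights, the chain rule writes $\nabla_{\theta^i_{p_g}}\mathcal{L}_{adv}$ as a composition whose $\partial f^i_g/\partial \theta^i_{p_g}$ factor is linear in the input $E_g=0$ (since $p^i_g$ is linear), so the gradient is zero; the same argument handles $\nabla_{\theta^i_{p_d}}\mathcal{L}_{adv}$. Moreover, $\mathcal{L}_{lip}$ does not depend on $\theta^i_{p_g}$ at all (it is a purely discriminator-side regularizer), so its gradient in that direction is trivially zero. For the embedding weights, any perturbation of $\theta_e$ propagates through $E_g,E_d$ into perturbations of $\theta^i_g,\theta^i_d$; but because $\theta^i_g=\theta^i_d=0$ and every hidden activation of $g_i$ and $d_i$ also vanishes, every chain-rule path from $\theta^i_g$ (resp.\ $\theta^i_d$) to the primary network output picks up a factor from a downstream zero weight, so the Jacobian of the primary networks with respect to their own weights is zero at this point. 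Hence $\partial\mathcal{L}_{adv}/\partial\theta^i_g=0=\partial\mathcal{L}_{adv}/\partial\theta^i_d$, which in turn kills $\nabla_{\theta_e}\mathcal{L}_{adv}$.

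Third, for the Lipschitz term I would argue that $d_i\equiv 0$ implies $\nabla_u d_i\equiv 0$ pointwise, so the integrand $\|\nabla_u d_i\|_2^2$ is identically zero and its derivative $2\,\nabla_u d_i\cdot\nabla_u(\partial d_i/\partial\theta_e)$ also vanishes, yielding $\nabla_{\theta_e}\mathcal{L}_{lip}=0$; equivalently, $\mathcal{L}_{lip}\geq 0$ with equality at this point, so the parameter is a global minimum. Combining the three items, every direction used by $\mathbb{B}$ has zero gradient at $(\theta_e=0,\theta^i_{p_g},\theta^i_{p_d})$, proving that it is an equilibrium. The main obstacle I expect is articulating cleanly why the Jacobian of each primary network with respect to its own weight vector is zero at the all-zero weight vector when $\sigma(0)=0$, i.e.\ justifying the chain-of-zeros argument simultaneously at both nesting levels (hypernetwork and primary network); but this is precisely the step that already underpins Case~1's final equilibrium claim and can be reused almost verbatim here.
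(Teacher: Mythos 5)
Your proof is correct and takes essentially the same route as the paper's: propagate zeros through both hypernetworks via $\sigma(0)=0$, note the outputs are identically zero regardless of $\theta^i_{p_g},\theta^i_{p_d}$, and use that $d_i\equiv 0$ puts $\mathcal{L}_{lip}$ at its global minimum. You are in fact more careful than the paper on one point --- the paper never explicitly justifies why $\nabla_{\theta_e}\mathcal{L}_{adv}=0$, and your observation that the Jacobian of a multi-layer primary network with respect to its all-zero weight vector vanishes (together with $\nabla_u d_i\equiv 0$ killing the generator path) supplies exactly that missing step.
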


\begin{proof}
We note that if $\theta_e = 0$, then, since $\sigma(0)=0$, the outputs of $e$, $f^i_g$, $f^i_d$, $g_i$ and $d_i$ are all zero, regardless of the values of the weights $\theta^i_{p_g},\theta^i_{p_d}$. In particular, the gradients of $\mathcal{L}_{adv}(g_i,d_i)$ with respect to $\theta^i_{p_g}$ and $\theta^i_{p_d}$ are zero. In addition, the Lipschitz loss function is at its global minima for $d_i$, and therefore, its gradient with respect to $(\theta_e,\theta_{p_d})$ is zero as well. Therefore, we conclude that any possible step starting from $(\theta_e = 0,\theta_{p_g},\theta_{p_d})$ would not change the weights.
\end{proof}

\subsection{Our Method}

\begin{proposition} 

Assume that $\mathcal{I} \subset \mathbb{R}^{3 \times h \times w}$ is compact. Let $\epsilon>0$ be an approximation error. Let $g_i(z,I) := g_i(z;f^i_g(I;\theta_{f_g}))$ be a hyper-generator and $\mathcal{C}$ a class of discriminators. Assume that $S^*$ is continuous over $\mathcal{I}$. Then, there is a large enough neural network $f^i_d$ (whose size depends on $\epsilon$), such that, the hyper-discriminator $d_i(u,I) := d_i(u;f^i_d(I;\theta_{f_d}))$ satisfies:
\begin{equation*}
\begin{aligned}
\mathbb{E}_I W_{\mathcal{C}}(\mathcal{D}_{\hat{I},i},\mathcal{D}_{I,i}) =& \max_{\theta_{f_d}} \mathop{\mathbb{E}}_I\left\{\mathop{\mathbb{E}}_{z_{1:i}} d_i(u_{\hat{I},i};f^i_d(I)) - \mathop{\mathbb{E}}_{u_{I,i}} d_i(u_{I,i};f^i_d(I)) \right\} + o_{\epsilon}(1),
\end{aligned} 
\end{equation*}
where the maximum is taken over the parameterizations $\theta_{f_d}$ of $f_{d}$, such that, $d_i(\cdot; f^i_d(I;\theta_{f_d})) \in \mathcal{C}^1$.
\end{proposition}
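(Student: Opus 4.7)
The plan is to reduce the proposition to a universal-approximation argument applied to the continuous selector $S^*$. Starting from Eq.~\ref{eq:S}, the target quantity $\mathbb{E}_I W_{\mathcal{C}}(\mathcal{D}_{\hat{I},i},\mathcal{D}_{I,i})$ is already rewritten as a maximum over all admissible selectors $S:\mathcal{I}\to \Theta$ (those with $d_i(\cdot;S(I))\in\mathcal{C}^1$), and the hypothesis is that a continuous maximizer $S^*$ exists. The plan is therefore to (i) observe that any parameter $\theta_{f_d}$ of $f^i_d$ produces an admissible selector, giving the easy direction of the inequality, and (ii) construct, via a universal approximation theorem, a parameter $\theta_{f_d}$ whose induced selector is close enough to $S^*$ that the value of $\mathcal{L}_{adv}$ differs from the optimum by $o_\epsilon(1)$.

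First I would dispatch the easy direction: for any admissible $\theta_{f_d}$, the induced mapping $I\mapsto f^i_d(I;\theta_{f_d})$ is a particular admissible selector, hence $\max_{\theta_{f_d}}\mathcal{L}_{adv}(g_i,d_i)\leq \mathbb{E}_I W_{\mathcal{C}}(\mathcal{D}_{\hat{I},i},\mathcal{D}_{I,i})$. For the other direction, compactness of $\mathcal{I}$ together with continuity of $S^*$ lets me invoke a standard universal-approximation theorem (e.g.~\cite{Cybenko1989,Hornik1991ApproximationCO}) to obtain, for every $\eta>0$, a neural network $f^i_d$ and parameters $\theta^{\eta}_{f_d}$ with $\sup_{I\in\mathcal{I}}\|f^i_d(I;\theta^{\eta}_{f_d})-S^*(I)\|\leq \eta$. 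Since $d_i(u;\theta)$ is a fixed neural-network architecture in $(u,\theta)$ and the supports of $\mathcal{D}_I,\mathcal{D}_{\hat{I},i},\mathcal{D}_{I,i}$ are bounded, both the integrand $\mathbb{E}_{z_{1:i}}d_i(u_{\hat{I},i};\theta)-\mathbb{E}_{u_{I,i}}d_i(u_{I,i};\theta)$ and the Lipschitz constant $\|d_i(\cdot;\theta)\|_L$ are continuous in $\theta$, uniformly for $I$ in the compact set $\mathcal{I}$ and $\theta$ in any compact neighborhood of $\mathrm{Im}(S^*)$. Consequently, replacing $S^*(I)$ by $f^i_d(I;\theta^{\eta}_{f_d})$ perturbs the value of $\mathcal{L}_{adv}$ by at most some $\omega(\eta)$ with $\omega(\eta)\to 0$ as $\eta\to 0$.

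The main obstacle is enforcing the admissibility constraint $d_i(\cdot;f^i_d(I;\theta_{f_d}))\in\mathcal{C}^1$, since the approximation can push the Lipschitz constant slightly above $1$. The standard remedy is a uniform rescaling: multiply the top-layer weights of $d_i$ (absorbed into the projection $p^i_d$) by $1/(1+\omega(\eta))<1$. Under the mild assumption that $\mathcal{C}$ is closed under positive scaling of the linear top-layer (used already in the proof of Prop.~\ref{prop:2}), the rescaled network still lies in $\mathcal{C}$ with Lipschitz constant $\leq 1$, and the corresponding change in $\mathcal{L}_{adv}$ is again $O(\omega(\eta))$ because the discriminator values remain bounded on the supports. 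Choosing $\eta$ small enough that the cumulative perturbation is $o_\epsilon(1)$ exhibits an admissible $\theta_{f_d}$ with $\mathcal{L}_{adv}(g_i,d_i)\geq \mathbb{E}_I W_{\mathcal{C}}(\mathcal{D}_{\hat{I},i},\mathcal{D}_{I,i})-o_\epsilon(1)$, which together with the easy direction yields the claimed equality up to $o_\epsilon(1)$. A subtler variant of this same rescaling trick would also handle the case where $S^*$ takes values on the boundary $\|d_i(\cdot;S^*(I))\|_L=1$ at some $I$, which is where the constraint bites hardest.
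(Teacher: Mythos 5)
Your proof follows essentially the same route as the paper's: the easy direction via the observation that every hypernetwork parameterization induces an admissible selector, and the hard direction via universal approximation of the continuous $S^*$ on the compact $\mathcal{I}$ combined with uniform continuity of $d_i(u;\theta)$ over a compact product of the patch supports and the relevant parameter set. The one place you go beyond the paper is the rescaling argument enforcing $d_i(\cdot;f^i_d(I;\theta_{f_d}))\in\mathcal{C}^1$ after approximation --- the paper's proof silently assumes the approximating parameters lie in the admissible set $\mathcal{Q}$, so your extra step patches a small gap rather than diverging from the argument.
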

\begin{proof}
Let $\mathcal{S}^1$ be the set of functions $S:I \mapsto \theta_I$, where $\theta_I$ correspond to a discriminator $d_i(\cdot;\theta_I) \in \mathcal{C}^1$. Let $\mathcal{Q}$ be the set of parameters $\theta_{f_d}$, such that, $d_i(\cdot; f^i_d(I;\theta_{f_d})) \in \mathcal{C}^1$ for all $I \in \mathcal{I}$. In particular, for any $\theta_{f_d} \in \mathcal{Q}$, we have: $f^i_d(I;\theta_{f_d})) \in \mathcal{S}^1$. Hence, we have: 
\begin{equation*}
\begin{aligned}
\mathop{\mathbb{E}}_I W_{\mathcal{C}}(\mathcal{D}_{\hat{I},i},\mathcal{D}_{I,i}) =& \max_{S \in \mathcal{S}^1} \mathop{\mathbb{E}}_I\left\{\mathop{\mathbb{E}}_{z_{1:i}} d_i(u_{\hat{I},i};S(I)) - \mathop{\mathbb{E}}_{u_{I,i}} d_i(u_{I,i};S(I)) \right\} \\
\geq& \max_{\theta_f \in \mathcal{Q}} \mathop{\mathbb{E}}_I\left\{\mathop{\mathbb{E}}_{z_{1:i}} d_i(u_{\hat{I},i};f^i_d(I)) - \mathop{\mathbb{E}}_{u_{I,i}} d_i(u_{I,i};f^i_d(I)) \right\}, \\
\end{aligned} 
\end{equation*}
Next, we would like to prove the opposite direction. Let $S^*$ be a continuous maximizer of the following objective:
\begin{equation}
\max_{S \in \mathcal{S}^1} \mathop{\mathbb{E}}_I\left\{\mathop{\mathbb{E}}_{z_{1:i}} d_i(u_{\hat{I},i};S(I)) - \mathop{\mathbb{E}}_{u_{I,i}} d_i(u_{I,i};S(I)) \right\} 
\end{equation}
Since $\mathcal{I}$ is compact, by~\cite{hanin2017approximating} there is a large enough neural network $f^i_d(\cdot;\theta^*_{f_d})$ (with sigmoid/tanh/ReLU activation) that approximates the continuous function $S^*$ up to an approximation error $\epsilon$ (of our choice) with respect to the $L_{\infty}$ norm, i.e., $\|f^i_d(\cdot;\theta^*_{f_d}) - S^*\|_{\infty} \leq \epsilon$. 

Recall that $\mathcal{D}_{\hat{I},i}$ and $\mathcal{D}_{I,i}$ are supported by compact sets. In addition, since $S^*$ is continuous over a compact set, $S^*(\mathcal{I})$ is compact as well. Let $U$ be a compact set that contains the union of the supports of both $\mathcal{D}_{\hat{I},i}$ and $\mathcal{D}_{I,i}$. Let $V$ be a compact set that contains $S^*(\mathcal{I})$ and $f^i_d(\mathcal{I};\theta^*_{f_d})$. Since the discriminator $d_i(u;\theta^i_{d})$ is a continuous function (a neural network with continuous activation functions) with respect to both $(u,\theta^i_d)$, it is uniformly continuous over $U \times V$. Therefore, we have:
\begin{equation}
\sup_{u \in U,I \in \mathcal{I}} \Big\vert d(u;S(I)) - d(u;f^i_d(I;\theta^*_{f_d}))\Big\vert = o_{\epsilon}(1)
\end{equation}
In particular, we have:
\begin{equation*}
\begin{aligned}
&\max_{S \in \mathcal{S}^1} \mathop{\mathbb{E}}_I\left\{\mathop{\mathbb{E}}_{z_{1:i}} d_i(u_{\hat{I},i};S(I)) - \mathop{\mathbb{E}}_{u_{I,i}} d_i(u_{I,i};S(I)) \right\} \\
=&\mathop{\mathbb{E}}_I\left\{\mathop{\mathbb{E}}_{z_{1:i}} d_i(u_{\hat{I},i};S^*(I)) - \mathop{\mathbb{E}}_{u_{I,i}} d_i(u_{I,i};S^*(I)) \right\} \\
\leq& \mathop{\mathbb{E}}_I\left\{\mathop{\mathbb{E}}_{z_{1:i}} d_i(u_{\hat{I},i};f^i_d(I;\theta^*_{f_d})) - \mathop{\mathbb{E}}_{u_{I,i}} d_i(u_{I,i};f^i_d(I;\theta^*_{f_d})) \right\} + o_{\epsilon}(1) \\
\leq&\max_{\theta_{f_d} \in \mathcal{Q}} \mathop{\mathbb{E}}_I\left\{\mathop{\mathbb{E}}_{z_{1:i}} d_i(u_{\hat{I},i};f^i_d(I)) - \mathop{\mathbb{E}}_{u_{I,i}} d_i(u;f^i_d(I)) \right\} + o_{\epsilon}(1) \\
\end{aligned} 
\end{equation*}
which completes the proof.
\end{proof}

\section{Training}
\subsection{Architecture}
We use ResNet-34 for the hypernetworks of both the generator and discriminator, with an embedding size of size 512. A multi-head dense linear layer is then applied and projects the image embedding into the different convolutional blocks of the main network. The main networks, (i.e., the generator and discriminator) share the same architecture and consist of 5 conv-blocks per scale of the form Conv(3 x 3)-LeakyRelu with 64 kernels per block.
For the generator, we hold a set of 10 linear heads projections for each scale, where each projection outputs the weights (or the biases) of its respective scale in the generator. For the discriminator, when training, only the current scale's linear projections are needed, thus we hold a single set of 10 linear head projections.\\ 
All LeakyReLU activations have a slope of 0.02 for negative values except when we use a classic discriminator for single image training, for which we use a slope of 0.2. Additionally, the generator’s last conv-block activation at each scale is Tanh instead of ReLU and the discriminator’s last convolutional block at each scale does not include any activation.

Differently from \cite{shaham2019singan}, but similarly to \cite{consingan} we do not gradually increase the number of kernels during training. 

Grouped convolutions were used in order to perform parallel computations in the main network for each image with its respective weights to speed up training.

\subsection{Progressive training}
We train with an initial noise of width $s_0=25$ pixels, except when training on the 50-images dataset and for the single mini-batch experiment (for which we use $s_0=28$ and $s_0=27$ respectively) such that the dimensions of the initial noise are ($\lceil{s_0*ar}\rceil, s_0$) where ar is the aspect ratio of the image.
If trained with multiple images, the default aspect ratio used for training is 3/4.\\
In terms of sizes of the images processed at each scale, we progress in a geometrical way as \cite{shaham2019singan}  with a scale factor of $r=0.6$ i.e., at each scale $i>0$ ,images of size $s_i=\frac{s_{i-1}}{r}$ are processed. This results in 7 scales for an image of size 256.
Although we train in a progressive manner, our hypernetworks receives as input (128,128) constant sized versions of the real images regardless of the current scale processed.
We progress from a scale to another at the end of the training of a current scale in the generator by copying the weights of its 10 linear projections to the next scale's projections and freeze the current scale, for the discriminator, we simply copy the weights of its linear projections and can safely delete its current set of linear projections from memory.

\subsection{Optimization}
The loss function is minimized using Adam optimizer with momentum parameters $\beta_1=0.5$, $\beta_2=0.999$ and different learning rates for each training setting, which we decreased by a factor of 0.1 after 80\% of the iterations.  
We used $\lambda_1=0.1$ and $\lambda_2=50$ for the coefficient of the gradient penalty in WGAN and the reconstruction loss respectively, $\lambda_2=10$ can also be used and yield good results. We clip the gradient s.t it has a maximal L2 norm of 1 for both the generators and discriminator. Batch sizes of 16 were used for all experiments involving a dataset of images.

\begin{tabular*}{\linewidth}{@{\extracolsep{\fill}}lcc}
\toprule
                                    & lr$_g$ & lr$_d$  \\
\midrule
 Single image & 1e-5 & 5e-4 \\
 Single mini-batch & 1e-5  & 1e-5\\
 Dataset & 5e-5 & 5e-5 \\
 \bottomrule
\end{tabular*}

Similarly to \cite{shaham2019singan}, we use MSE as the reconstruction loss, and at each iteration we multiply each noise map $z_i$ for ($i>1)$, by the RMSE obtained. This results in zero-mean and MSE varianced gaussian distributed noise maps and indicates the amount of details that need to be added at that scale for the current batch.
For the reconstruction, a single $z_1^0$ fixed random noise is used for all the images. \\
For feedforward modeling and applications, we use this single fixed random noise, and we multiply each scale's intermediate noise map by the RMSE obtained at the last iteration of this same scale during training.

\subsection{Number of iterations}

\begin{tabular*}{\linewidth}{@{\extracolsep{\fill}}lcc}
\toprule
                                    & number of iterations by scale   \\
\midrule
 Single image & 1500-2000  \\
 Single mini-batch & 2000  \\
 Places-50 & 4000  \\
 LSUN-50 & 5000 \\
 C250 & 20000 \\
 C500 & 30000 \\
 V500 & 25000  \\
 V2500 & 100000  \\
 V5000 & 150000 \\
 \bottomrule
\end{tabular*}

\subsection{GPU usage for training models}

\begin{tabular*}{\linewidth}{@{\extracolsep{\fill}}lcc}
\toprule
                  & GPU memory usage (256x256 resolution)   \\
\midrule
  Single image & 11GB  \\
 Single mini-batch & 11GB-15GB  \\
 Datasets & 22GB  \\
 \bottomrule
\end{tabular*}

At test time, the GPU memory usage is significantly reduced and requires 5GB. We trained all of our single image models and baselines with a single 12GB GeForce RTX 2080. The other models were trained on a single 32GB Tesla V100. Notice we compared the single and the dataset runtimes in Table 1 in the main paper, by approximating the runtime on a GeForce, training until scale 5 on GeForce RTX 2080 (until 12GB is out of memory) and by taking in account the difference in power between the latter and V100 GPU.

\section{Training with a pretrained image encoder}
In this section, we consider training our method with a 
"frozen" pretrained ResNet34 i.e., optimizing only the linear projections.

Our method uses  single linear layer projections, which strongly restricts the expressiveness of our network if the image encoder is “frozen”. We thus experimented with increasing the depth of these projection networks.

Below are the results on Places-50 :

\begin{center}
\begin{tabular*}{\linewidth}{@{\extracolsep{\fill}}lcccc}
\toprule
                  & End-to-end (our setting) & 1 layer & 3 layer & 5 layer   \\
\midrule
 SIFID & 0.05  & 0.26 & 0.14 & 0.17 \\
 mSIFID & 0.07 & 0.56 & 0.23 & 0.27 \\
 Diversity & 0.50 & 0.79 & 0.63 & 0.63\\
 \bottomrule
\end{tabular*}
\end{center}

If the problem could be learned with a "small enough" depth, our method would benefit from even faster training, at the cost of enlarging the model (and its consequences on inference time). Even though the results are convincing (both visually and quantitatively) in favor of end-to-end training, we prefer not to reject the hypothesis that proper hyper-parameter tuning and perhaps some adaptations could lead to decent results with a frozen backbone.

\section{Single-Image Generation}

\begin{figure}[t]
    \setlength{\tabcolsep}{1pt} 
    \renewcommand{\arraystretch}{1} 
    \centering
    \begin{tabular}{c c}
        \begin{tabular}{c}
		    \includegraphics[width=0.45\linewidth]{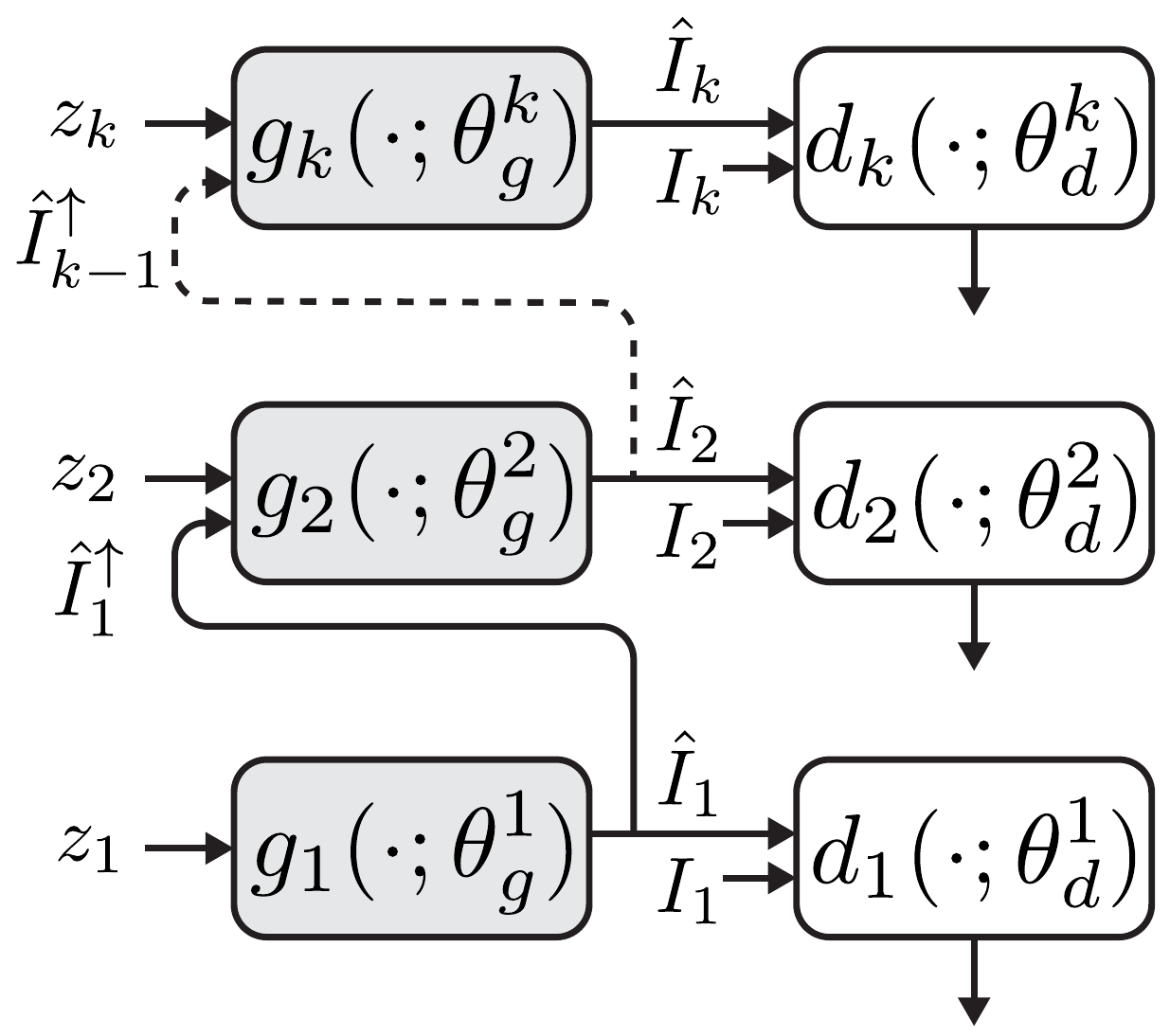}
        \end{tabular}&
        \begin{tabular}{c}
		    \includegraphics[width=0.45\linewidth]{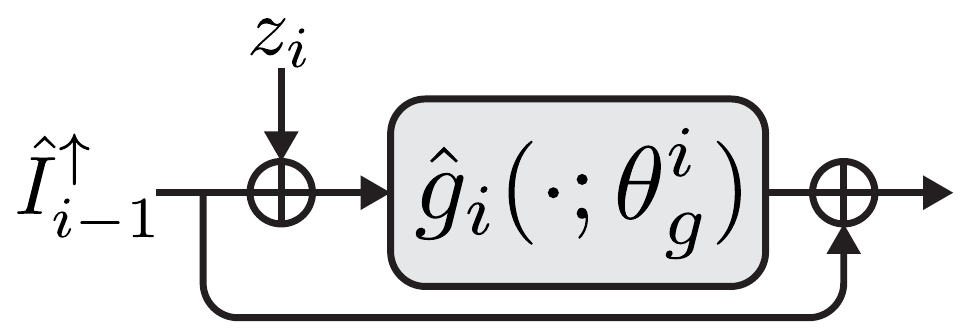}
        \end{tabular}\\
    (a) & (b)
    \end{tabular}
    \caption{{\bf Single-Image model architecture.} {\bf (a)} The complete hierarchical structure of generators and discriminators. {\bf (b)} The inner architecture of $g_i$, consists of noise addition and residual connection.}
    \label{fig:singan_arch}
\end{figure}
Fig.~\ref{fig:singan_arch} illustrates the single-image architecture with the internal skip connection, of~\cite{shaham2019singan}, as we discuss in section 2.

\subsection{Places-50 real images}
\includegraphics[width=\linewidth]{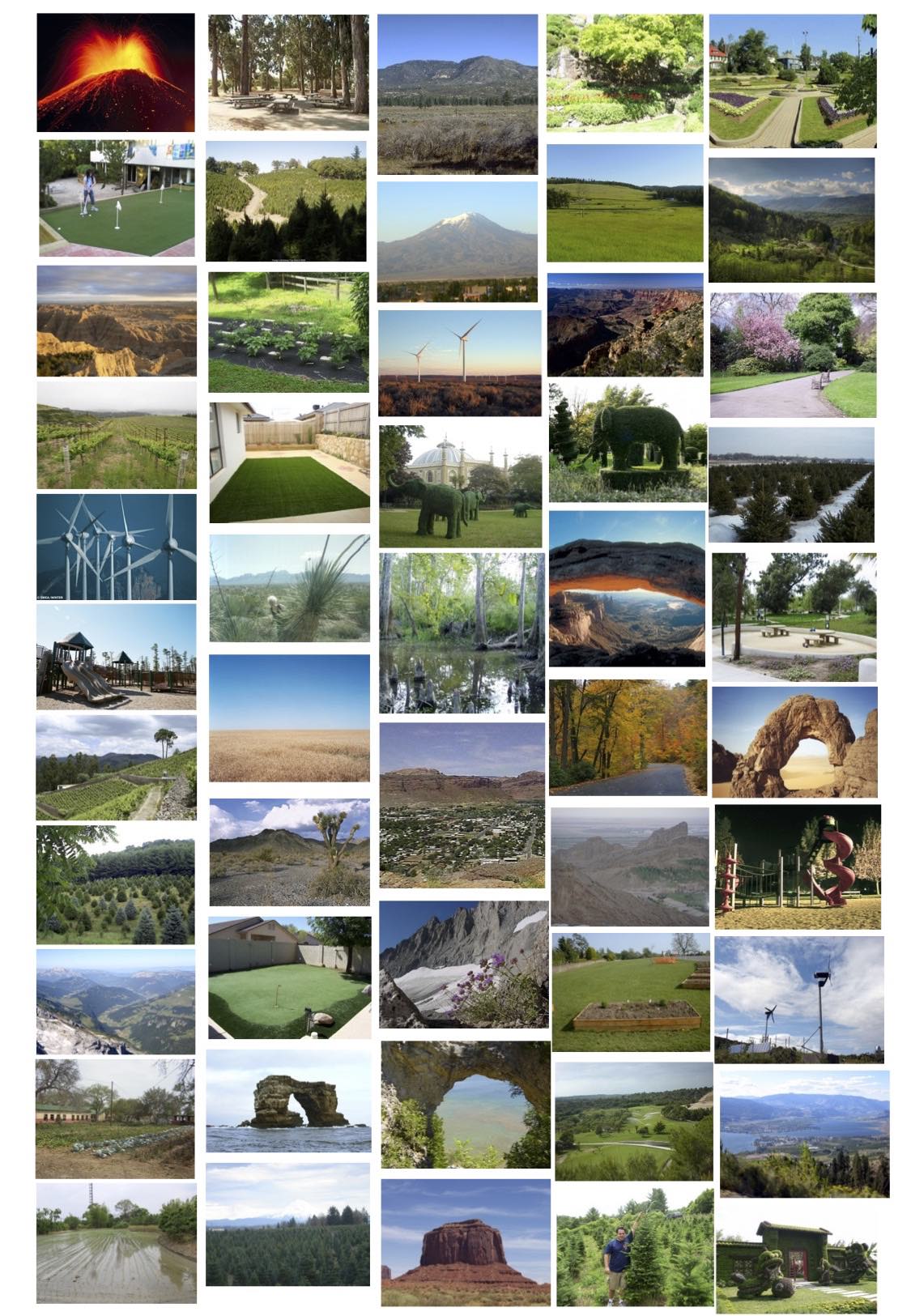} 
\newpage
\subsection{Places-50 fake images (single training)}
\includegraphics[width=\linewidth]{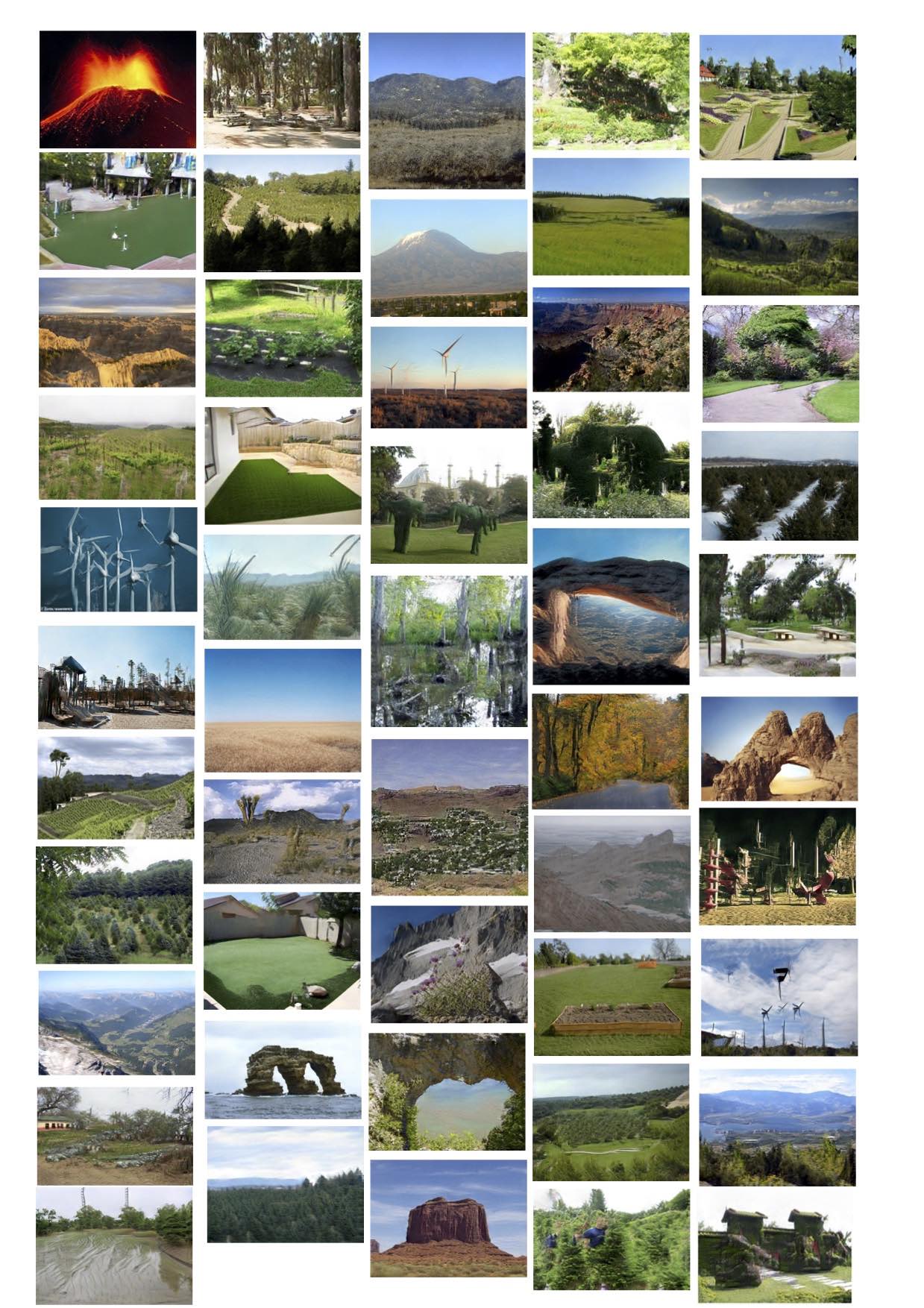}
\newpage

\subsection{Places-50 fake images (dataset training)}
\includegraphics[width=\linewidth]{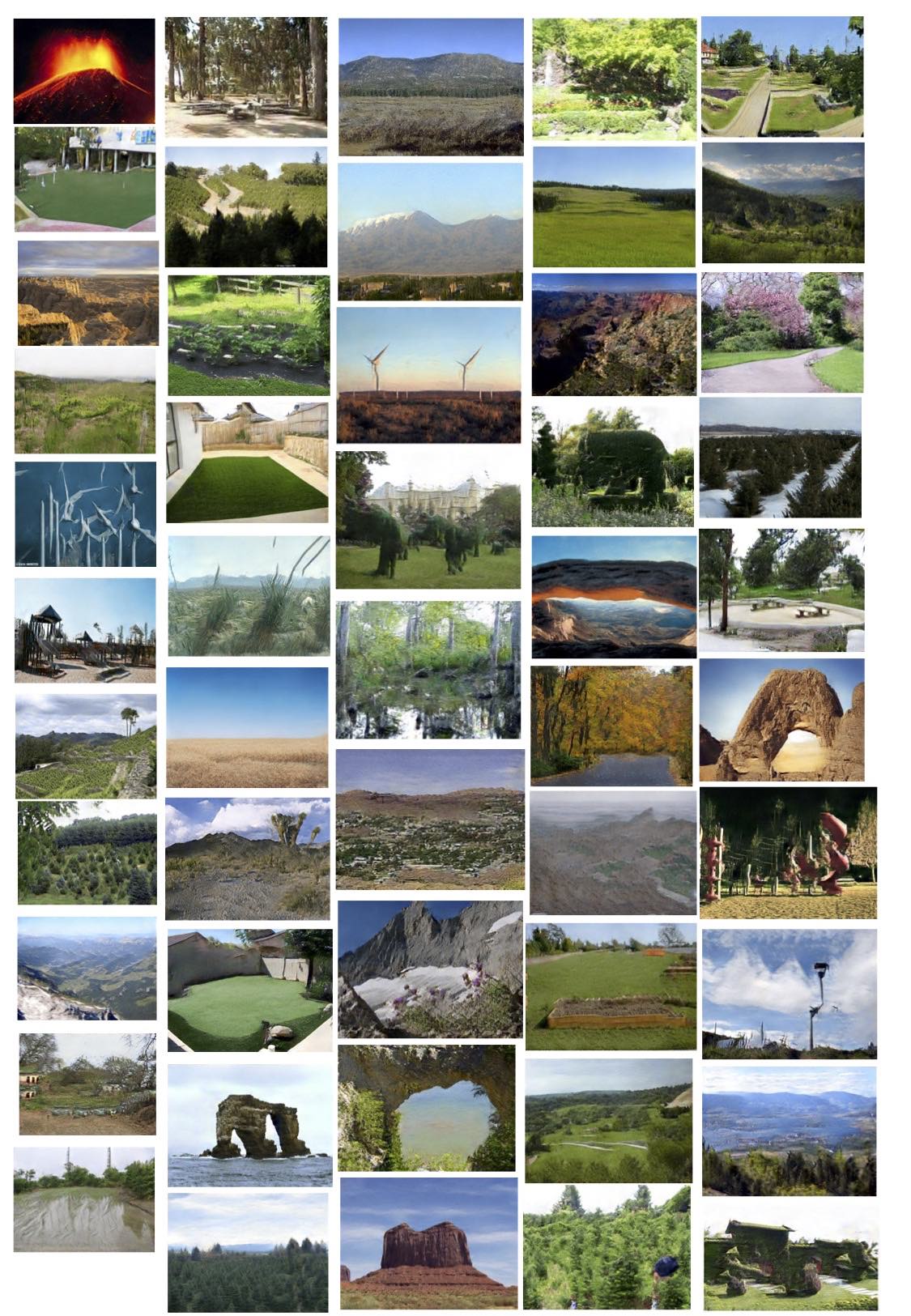}
\newpage

\subsection{Places-50 random samples (Single vs dataset training)}
\includegraphics[width=\linewidth]{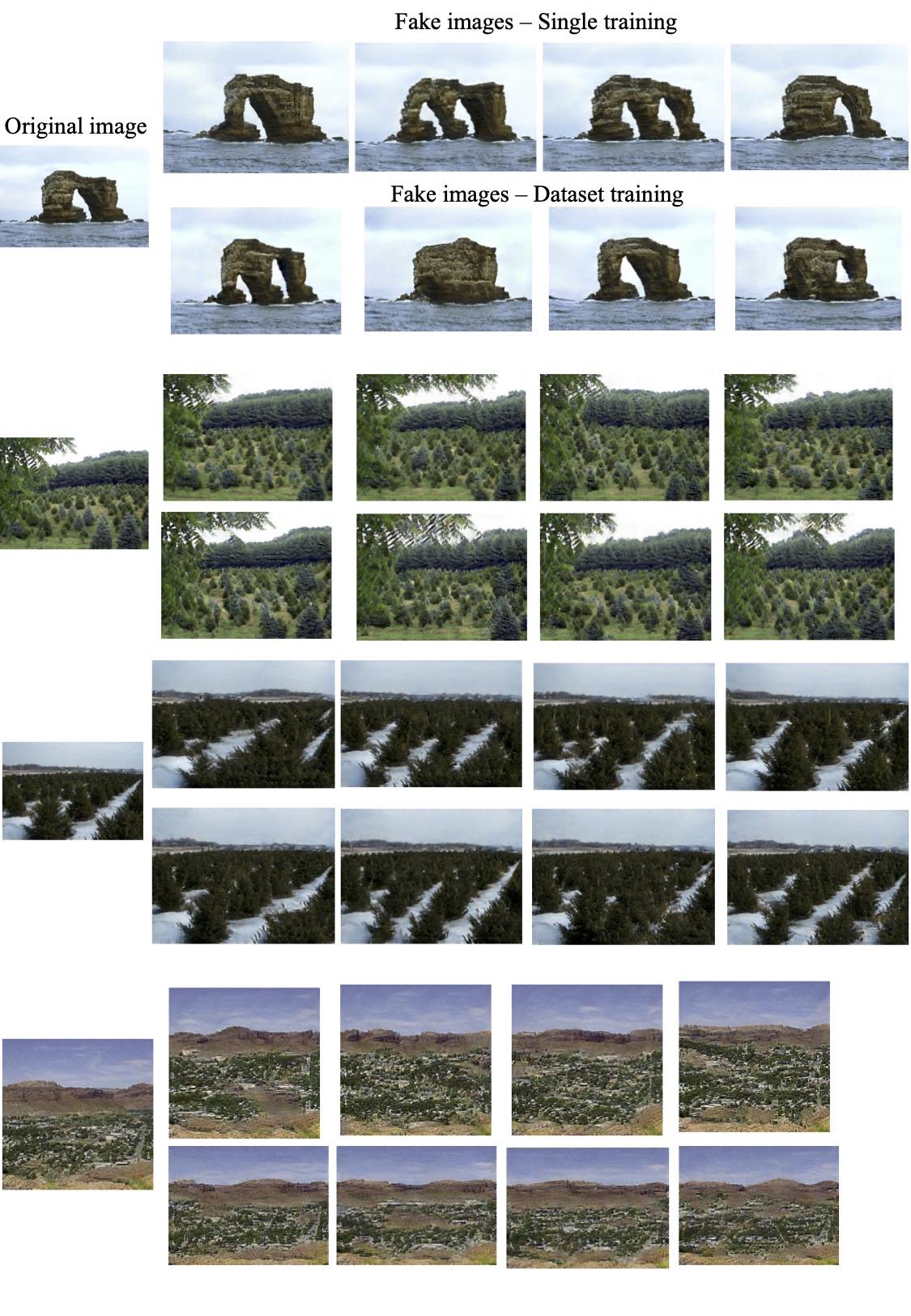}
\newpage

\subsection{Single mini-batch training experiment}
\includegraphics[width=\linewidth]{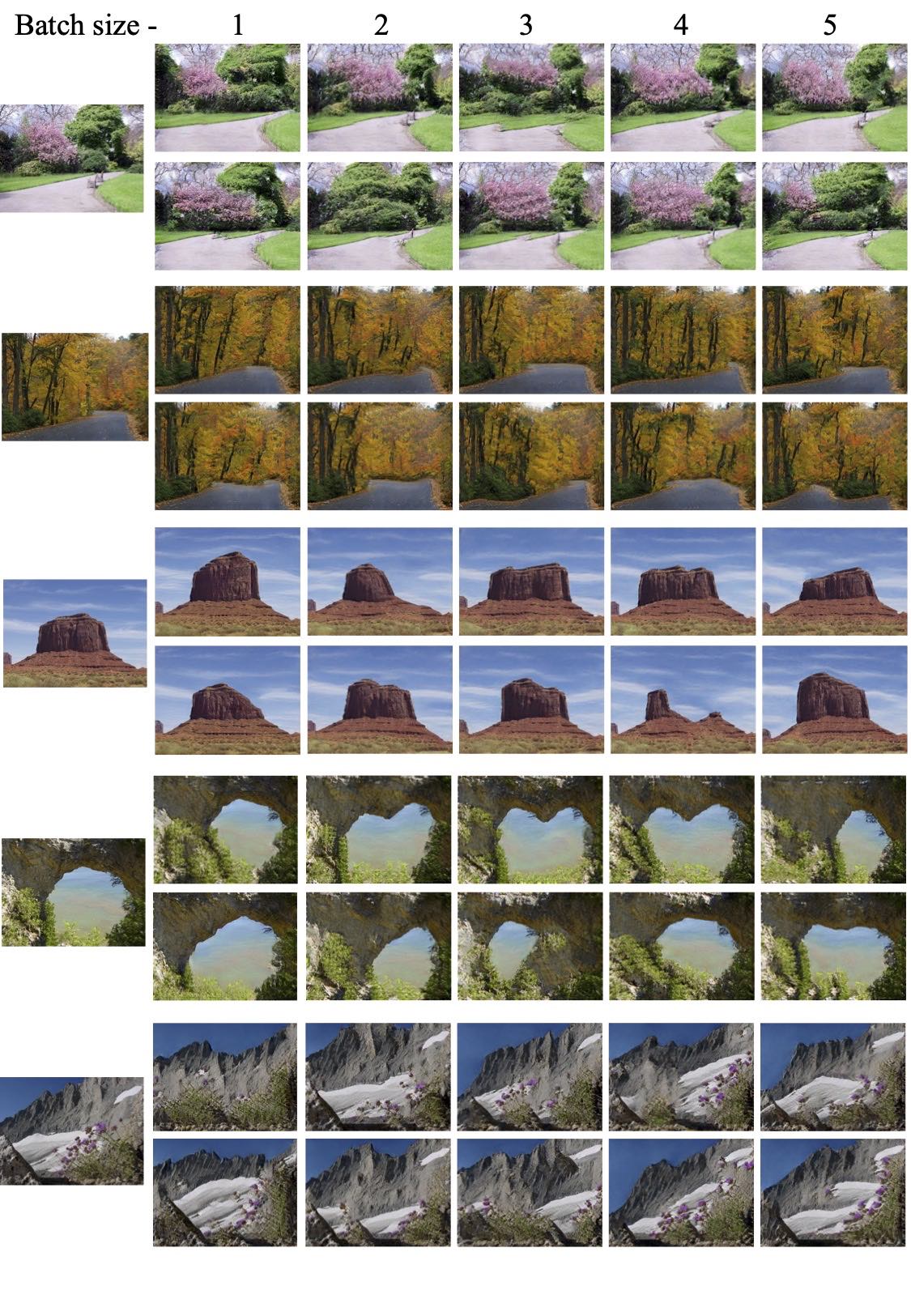}
\newpage

\subsection{V500 - Original images and random samples}
\includegraphics[width=\linewidth]{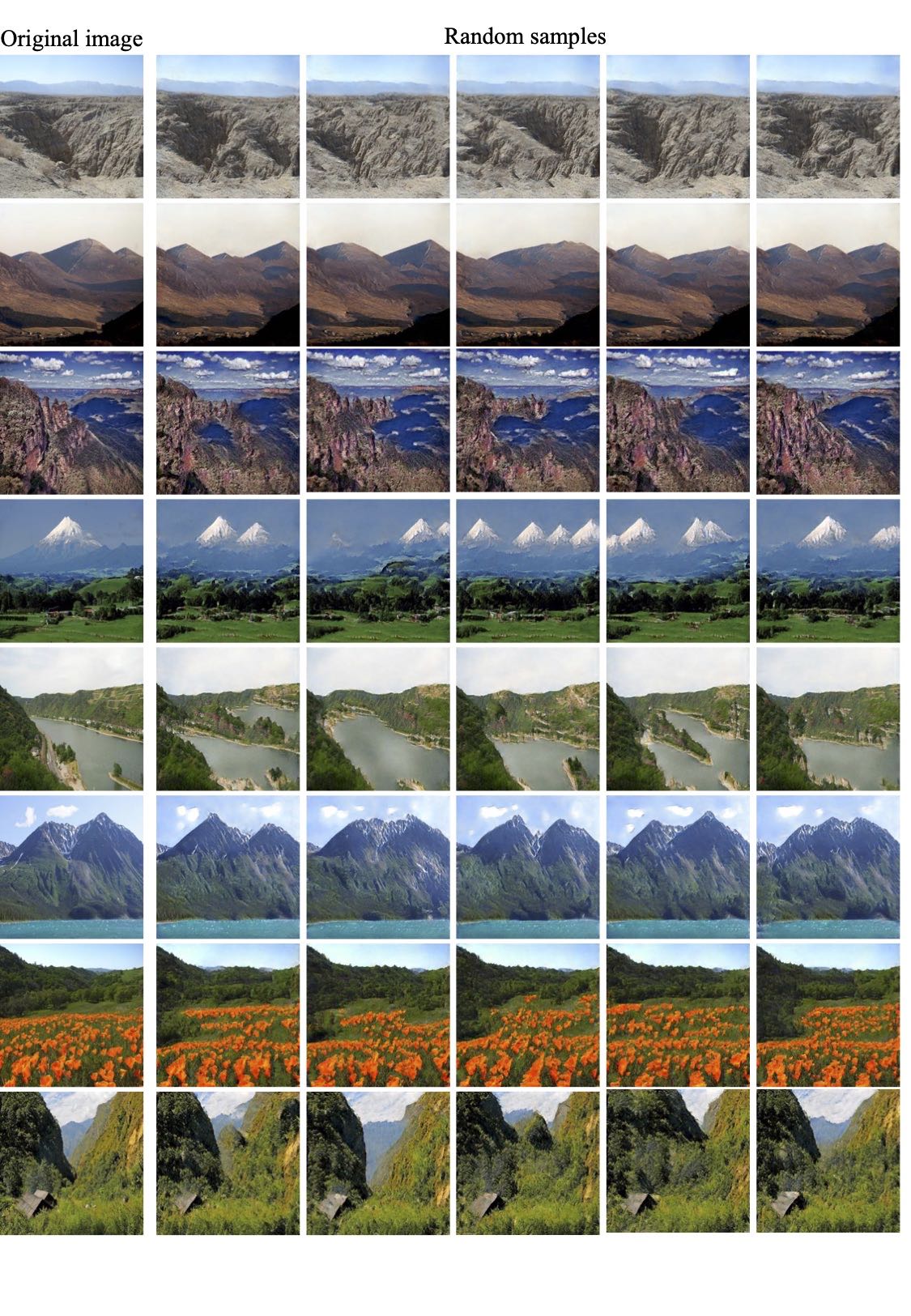}
\newpage

\subsection{V2500 - Original images and random samples} 
\includegraphics[width=\linewidth]{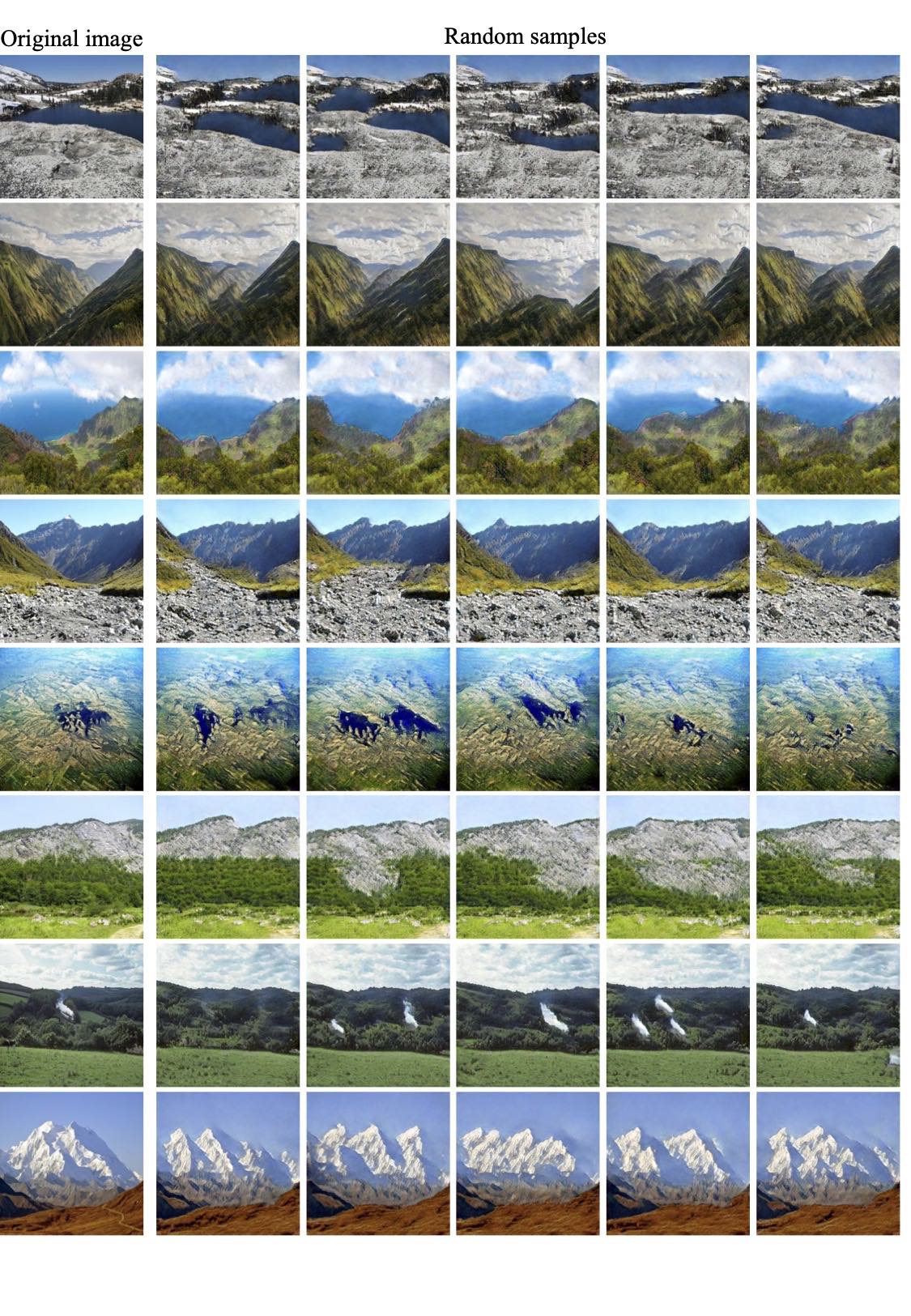}
\newpage

\subsection{V5000 - Original images and random samples} 
\includegraphics[width=\linewidth]{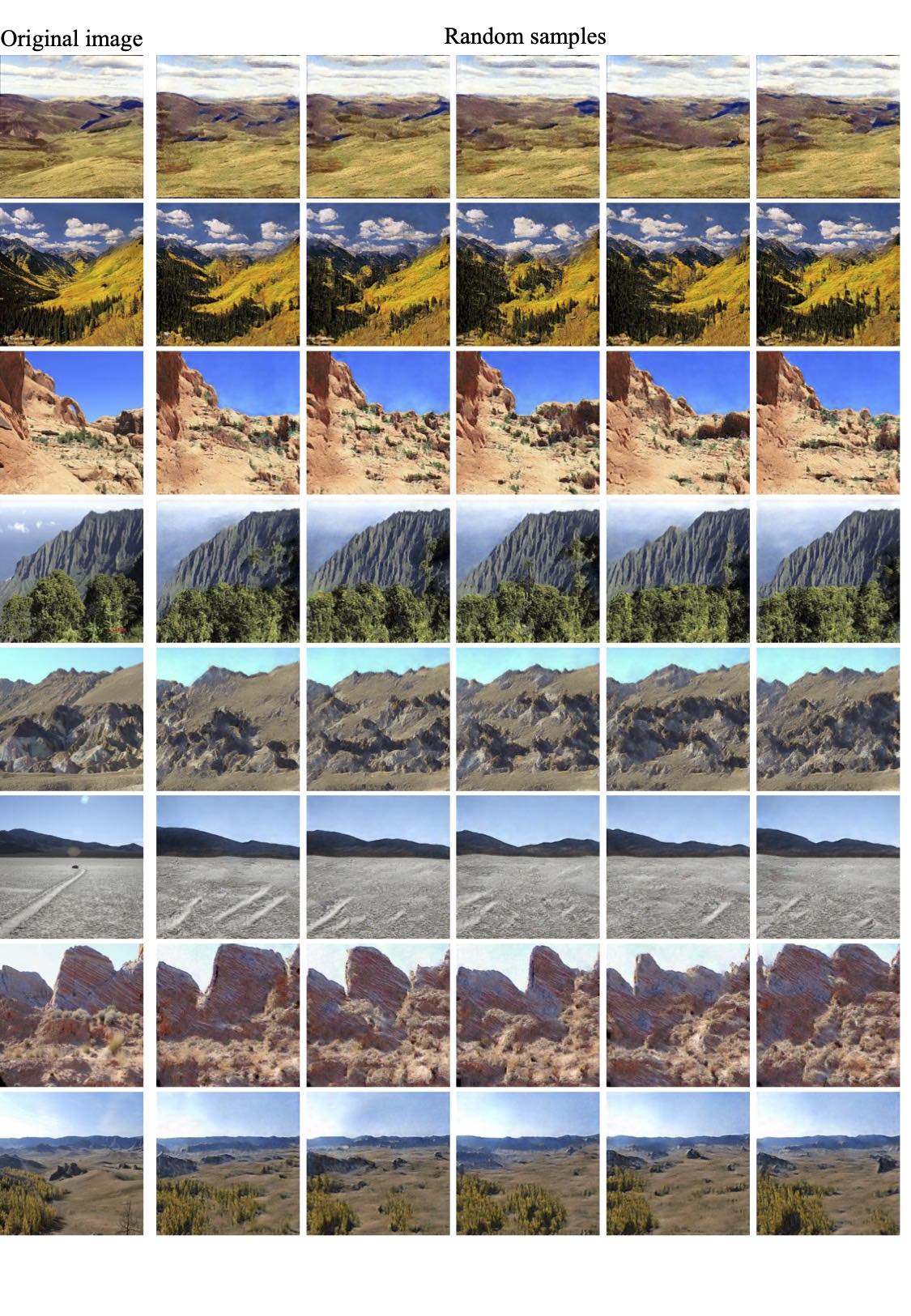}
\newpage

\subsection{LSUN-50}
\includegraphics[width=\linewidth]{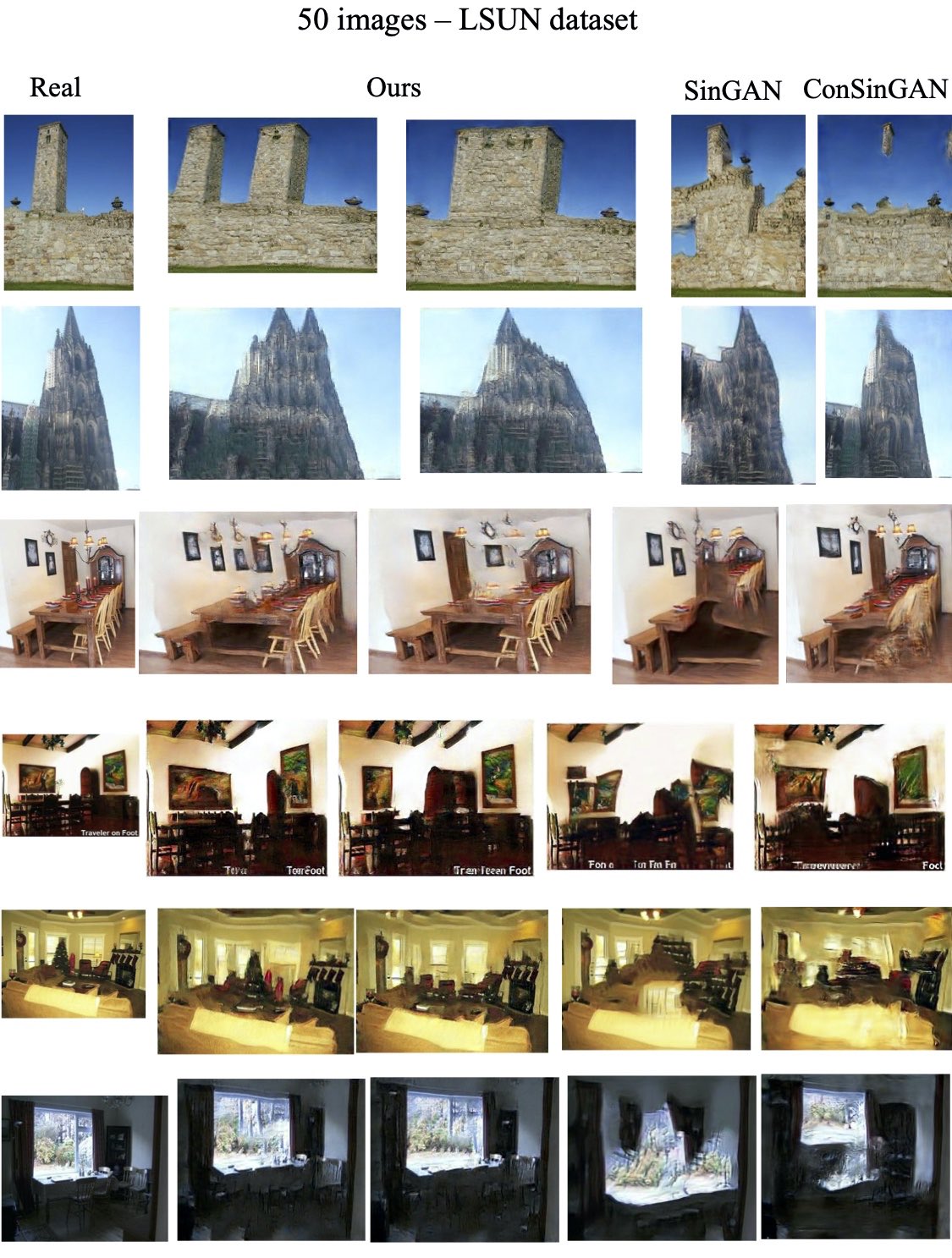}
\clearpage
\newpage

\subsection{C250}
\includegraphics[width=\textwidth]{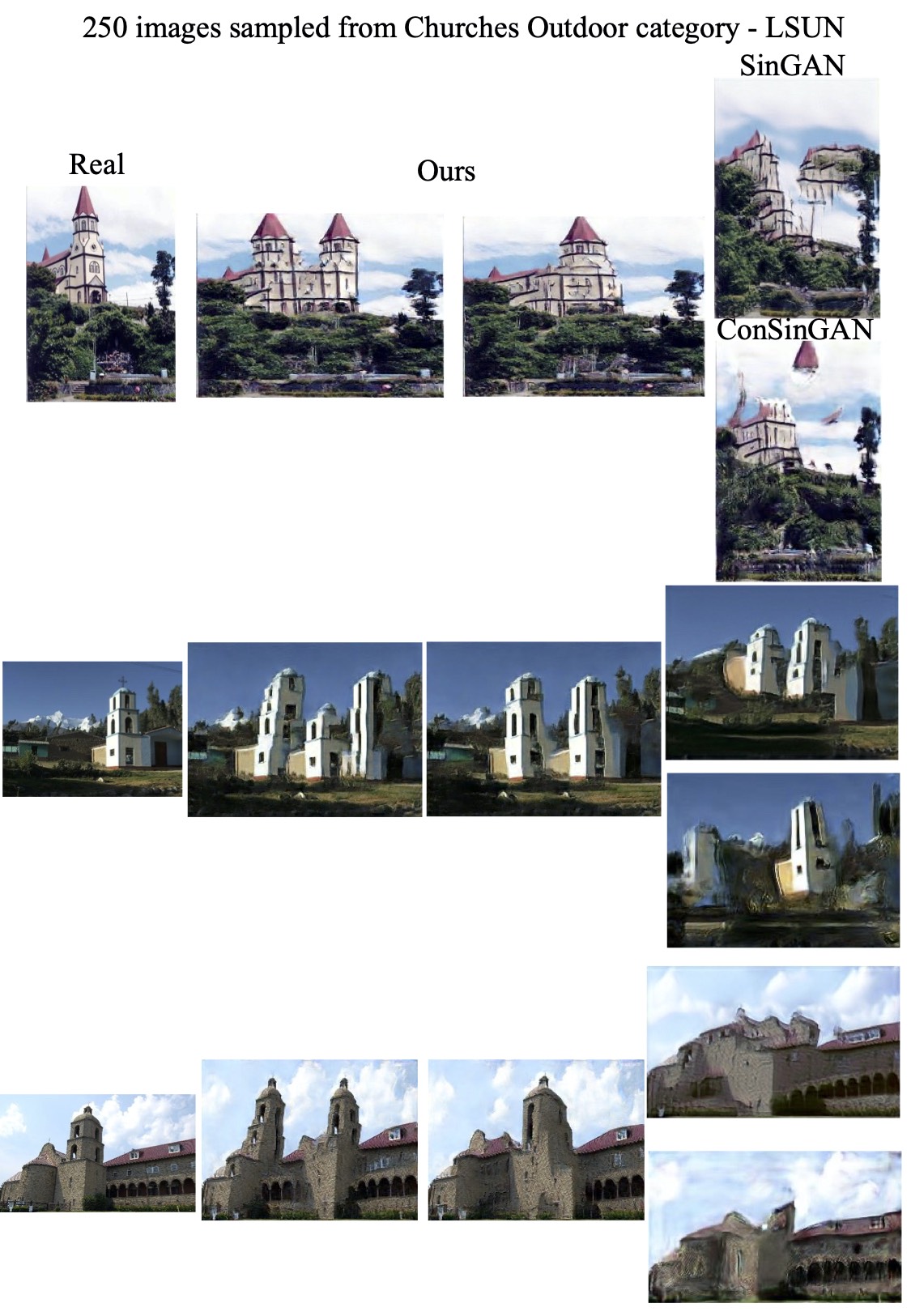}
\clearpage
\newpage

\subsection{CelebA}
We have tested the method on 50 randomly sampled face images from the CelebA dataset. 
We attach side-by-side results along with the baselines, where we used the same initial noise size (of width 22) for all of the methods to allow for a fair comparison.\\
Even though our method generates more realistic images than baselines by a notable margin, our results are still non-comparable to classic face generation (by standard GAN and Flow-based models). Thus, we consider face datasets as a limitation of our approach.

\centering
\includegraphics[width=\linewidth]{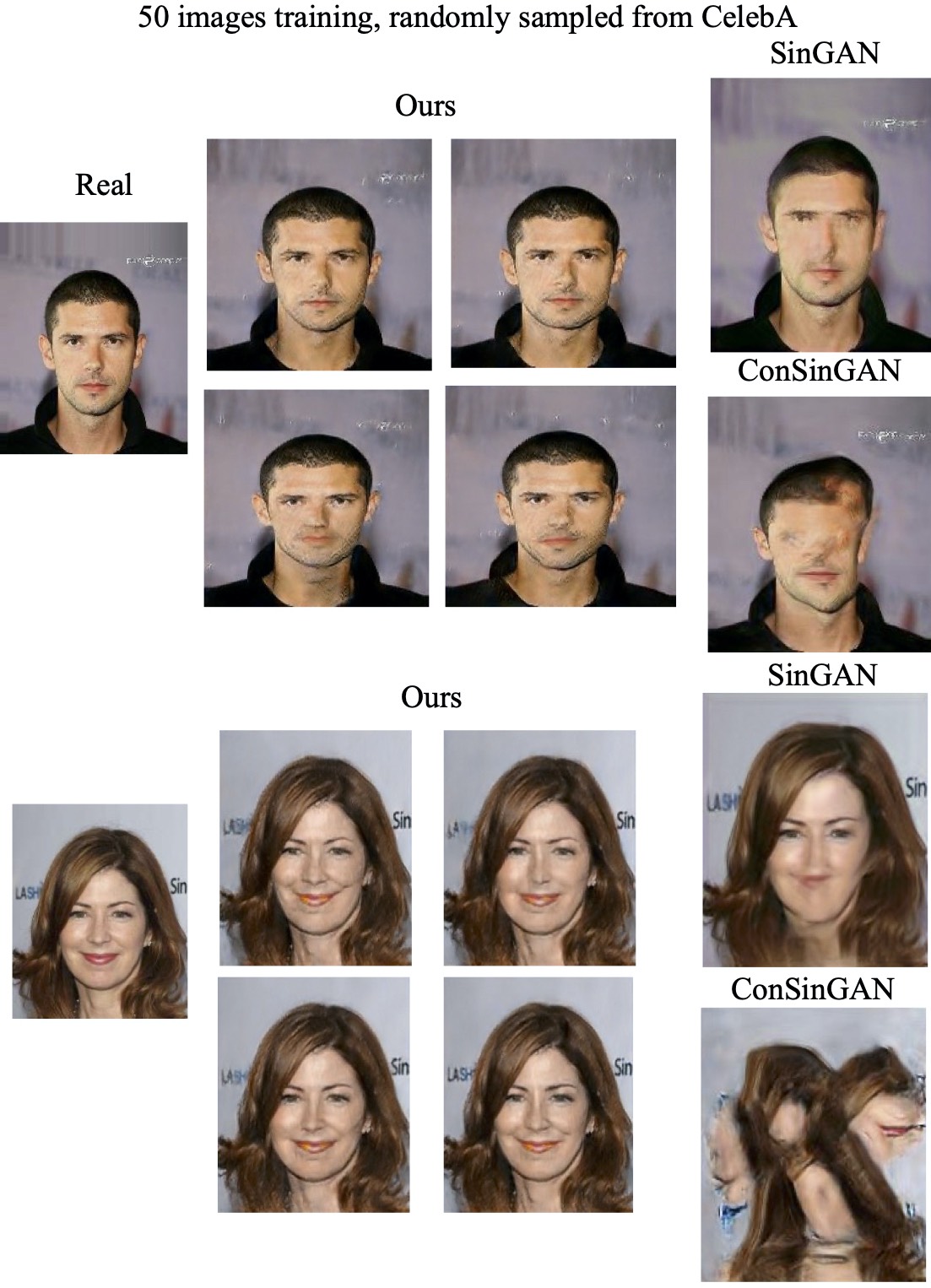}
\clearpage
\newpage
\subsection{Arbitrary sized and aspect-ratio image generation}
Due to the fully convolutional architecture adopted, all of our models are able to generate an image with an arbitrary size of aspect ratio by simply changing the dimensions of the noise maps used. Below are some examples obtained during single mini-batch training and V500 dataset training.
\includegraphics[width=\linewidth]{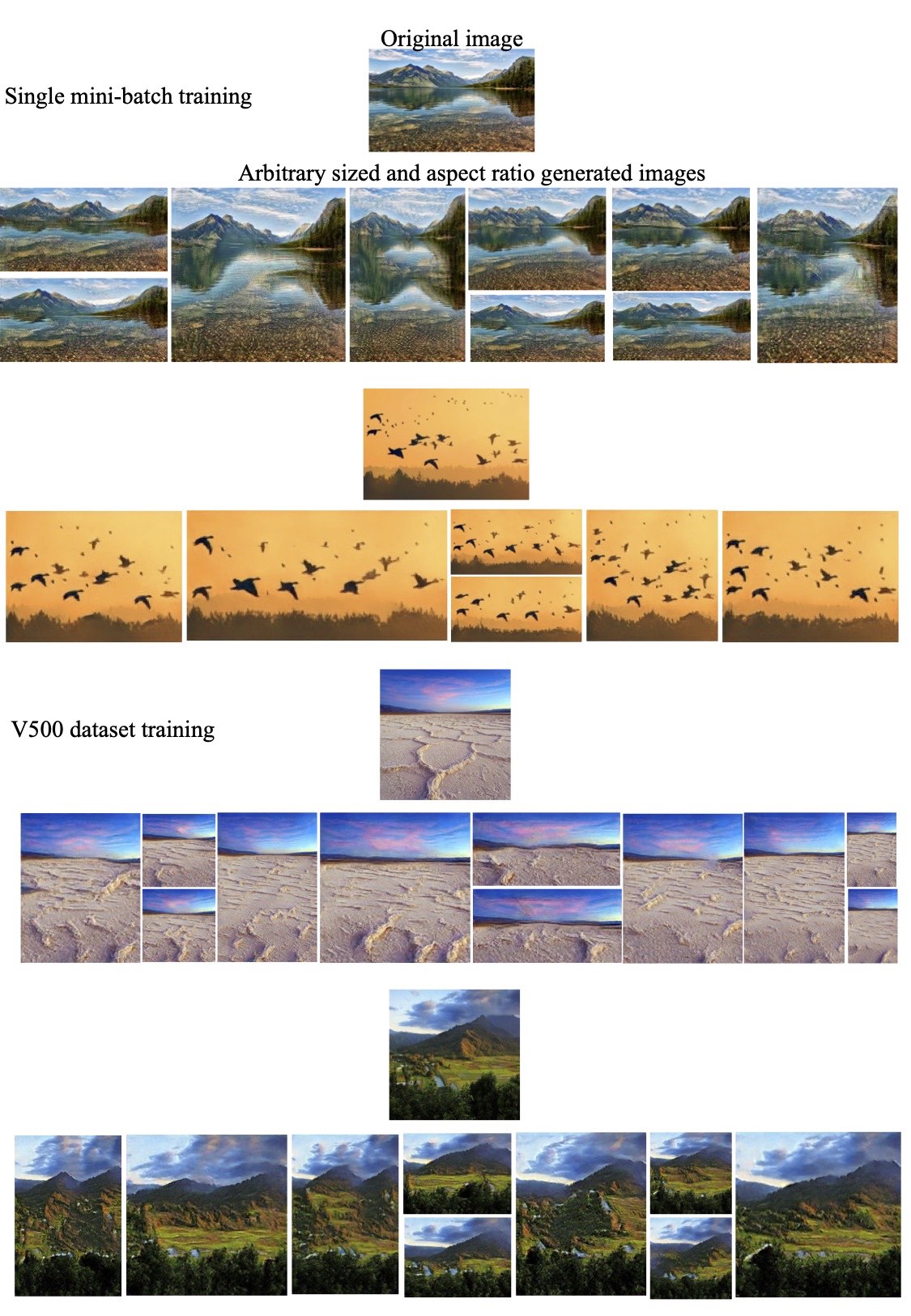}
\clearpage
\newpage

\section{Editing, Harmonization and Animation}
Following results were obtained using a single model, trained on the 50 image dataset merged with these 4 images (a total of 54 images). The applications are performed in the same exact way \cite{shaham2019singan} did. 

\includegraphics[width=\linewidth]{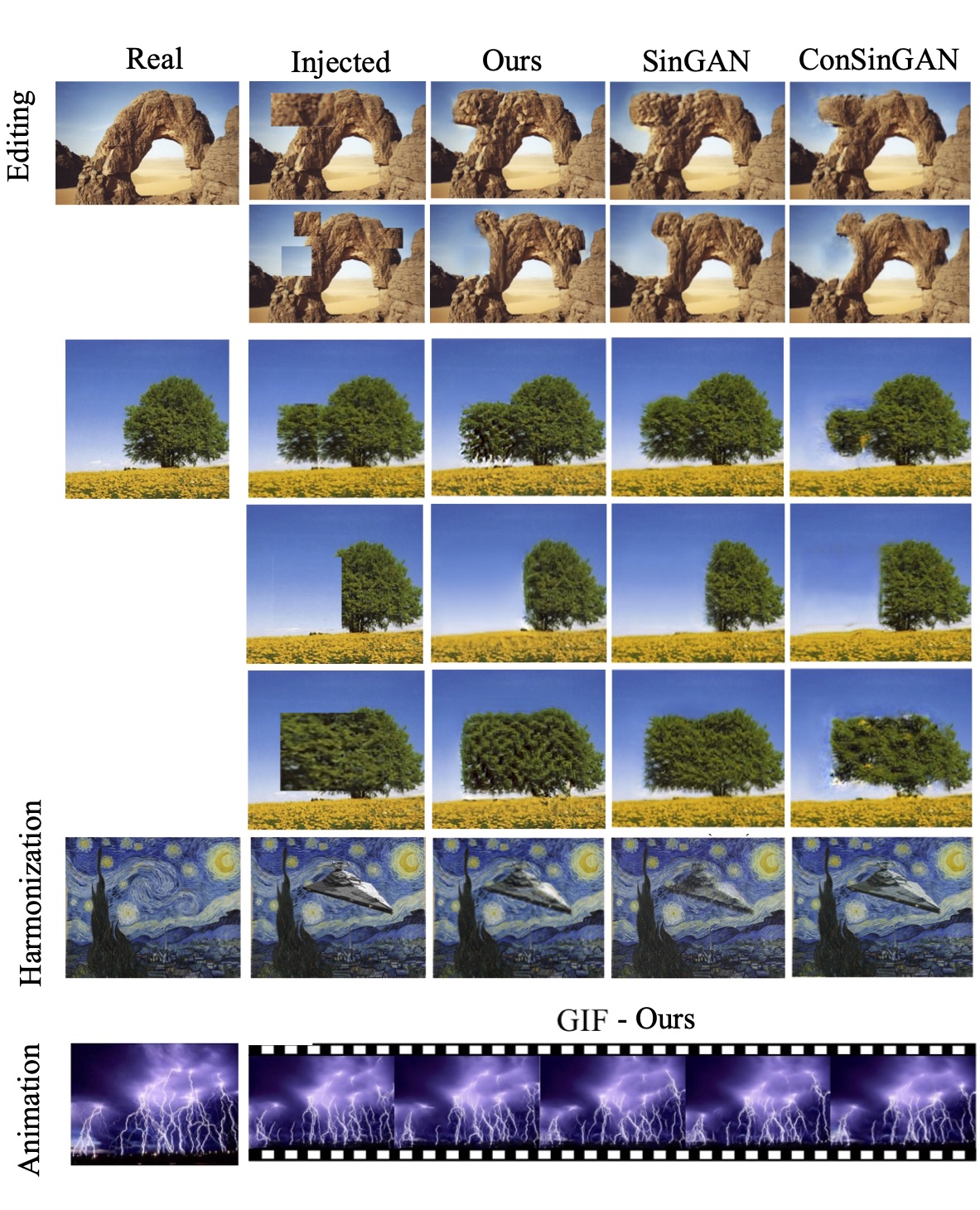}

\clearpage

\section{Interpolation}
We conducted an experiment to study the smoothness of our interpolations at different scales. We estimated the slope of the generated images $H_i(\alpha)$, for a fixed set of random seeds, on a discrete set of values $\alpha \in \{0.1j\}^{9}_{j=1}$  as follows: $s_{i,j} := \frac{\|H_i(\alpha_{j+1}) - H_i(\alpha_j)\|_1}{h \times w \cdot (\alpha_{j+1}-\alpha_j)}$, where $h \times w$ is the size of the images. As can be seen in Fig.~\ref{fig:lip}, the interpolations at higher scales tend to be significantly smoother than the interpolations at lower scales.
\begin{figure}[h]
    \centering
    \setlength{\tabcolsep}{1pt} 
    \renewcommand{\arraystretch}{1} 
    \begin{tabular}{cc}
    \includegraphics[width=0.44\linewidth]{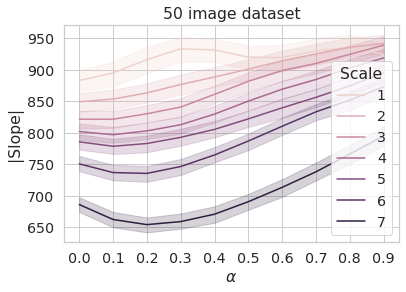}&
    \includegraphics[width=0.44\linewidth]{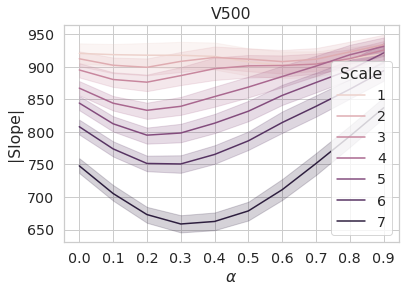}\\
    \includegraphics[width=0.44\linewidth]{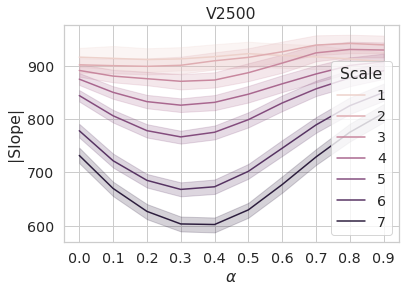}&
    \includegraphics[width=0.44\linewidth]{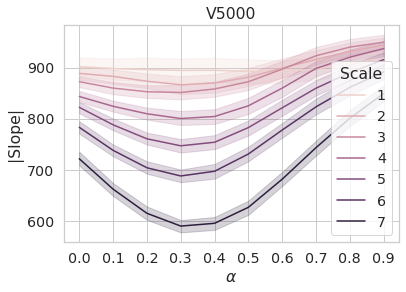}
    \end{tabular}
    \caption{{\bf Smoothness rate of the interpolations.} We plot the smoothness rate $s_{i,j}$ (y-axis) as a function of $\alpha$ (x-axis), averaged over $500$ pairs of images $A,B$ along with their standard deviations.}
    \label{fig:lip}
\end{figure}

\clearpage

\subsection{Places-50}
\includegraphics[width=\linewidth]{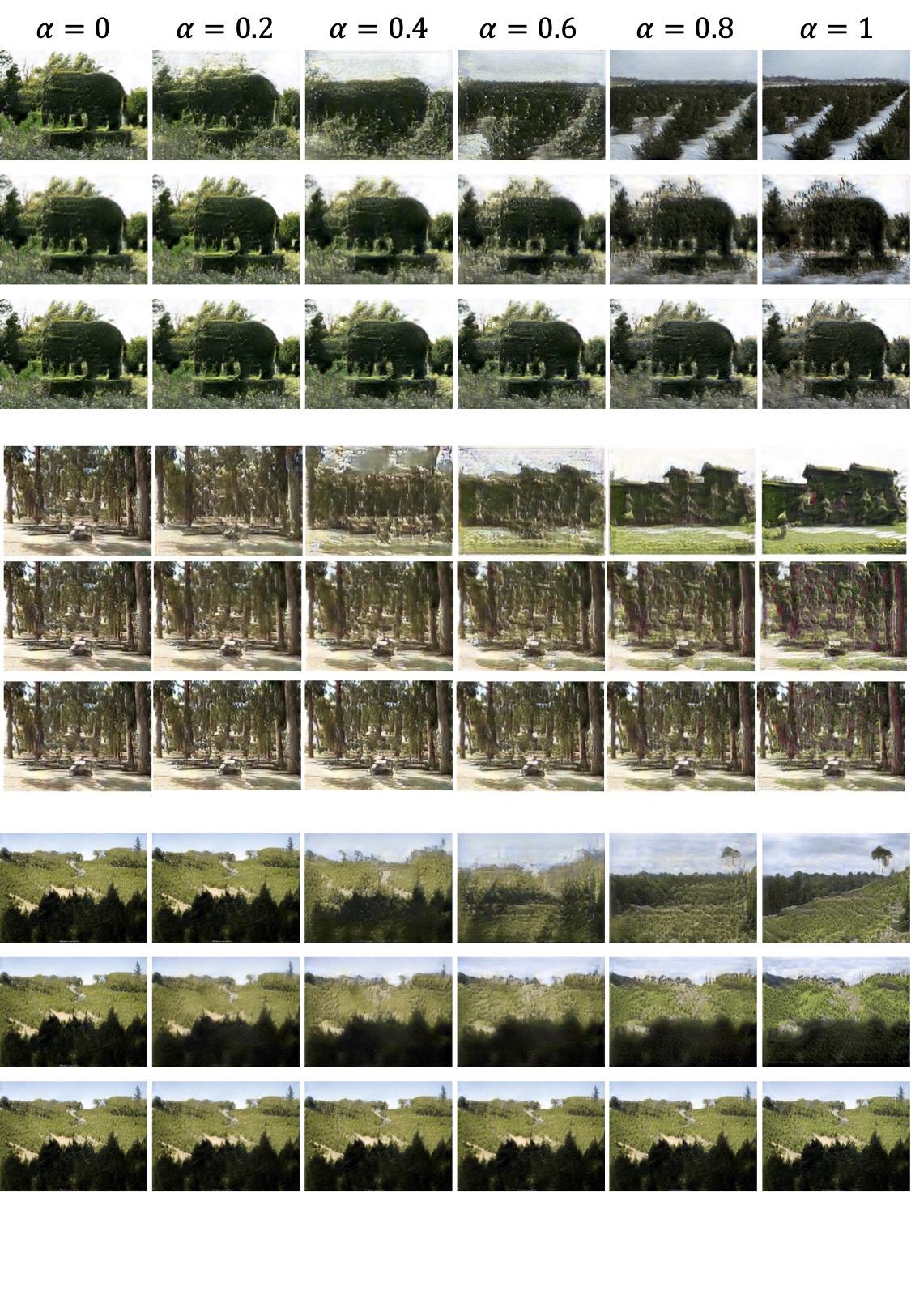} 
\newpage
\subsection{V500}
\includegraphics[width=\linewidth]{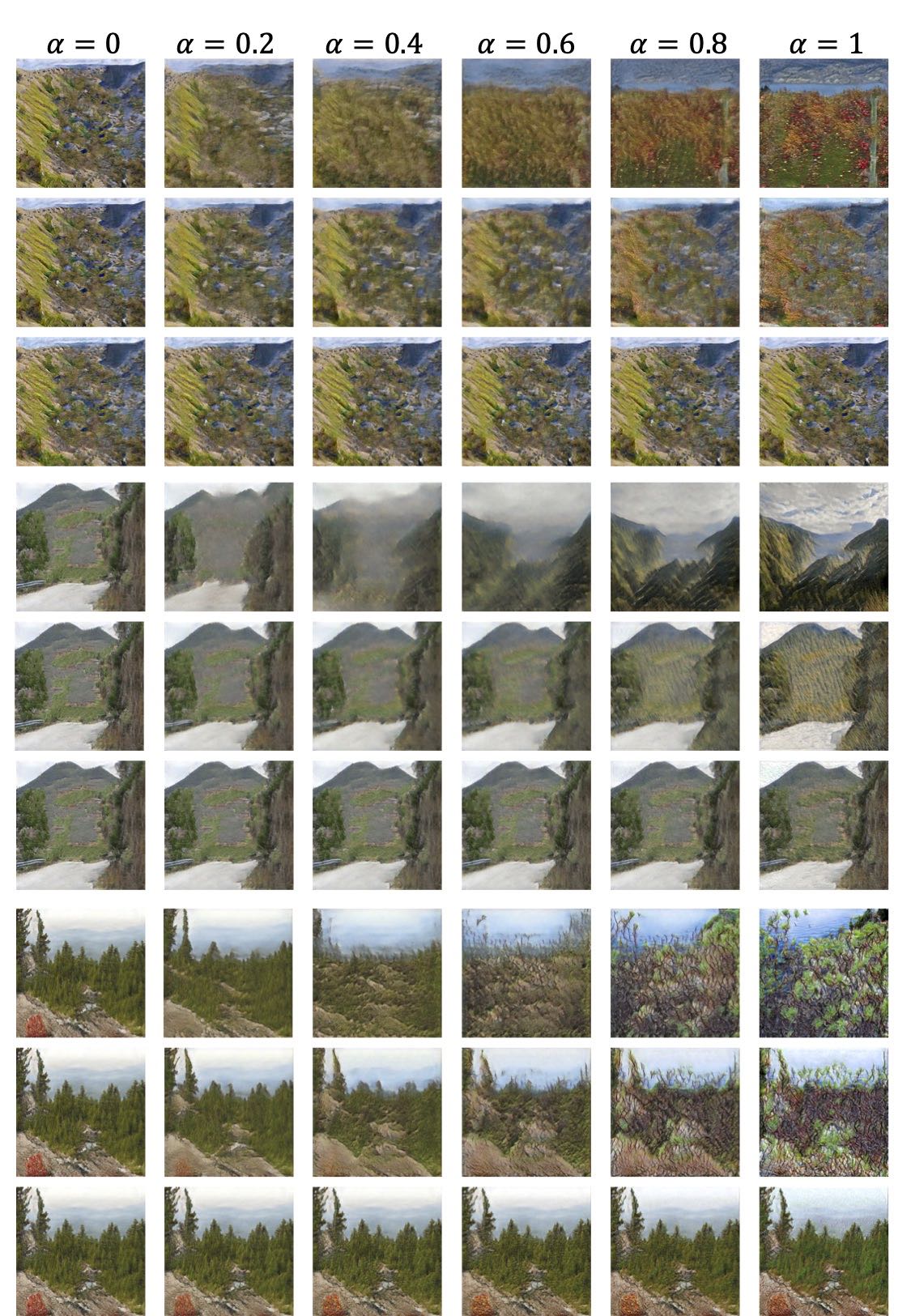} 
\newpage

\subsection{V2500}
\includegraphics[width=\linewidth]{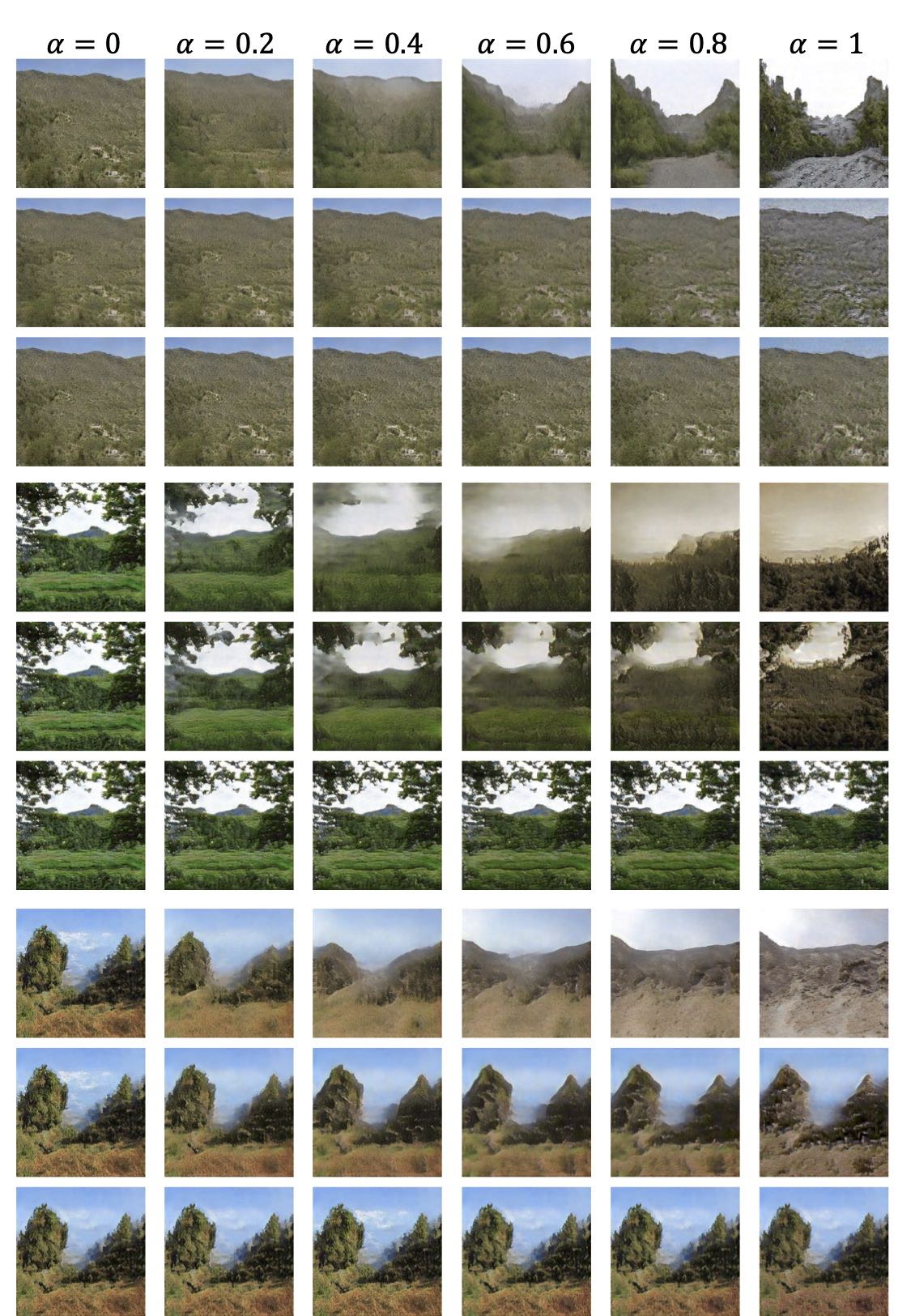} 
\newpage

\subsection{V5000}
\includegraphics[width=\linewidth]{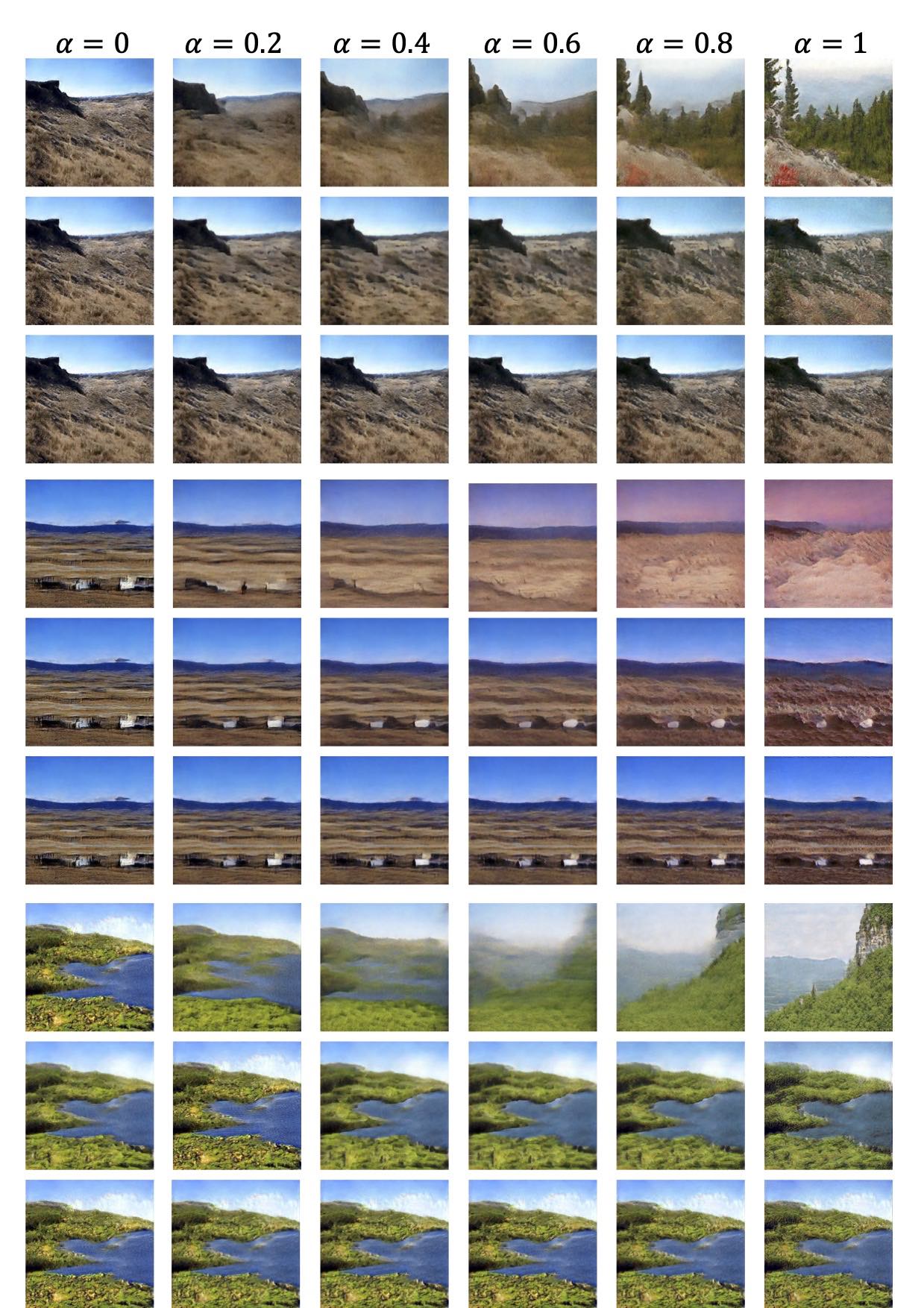} 
\newpage

\section{Feedforward modeling}
\includegraphics[width=\linewidth]{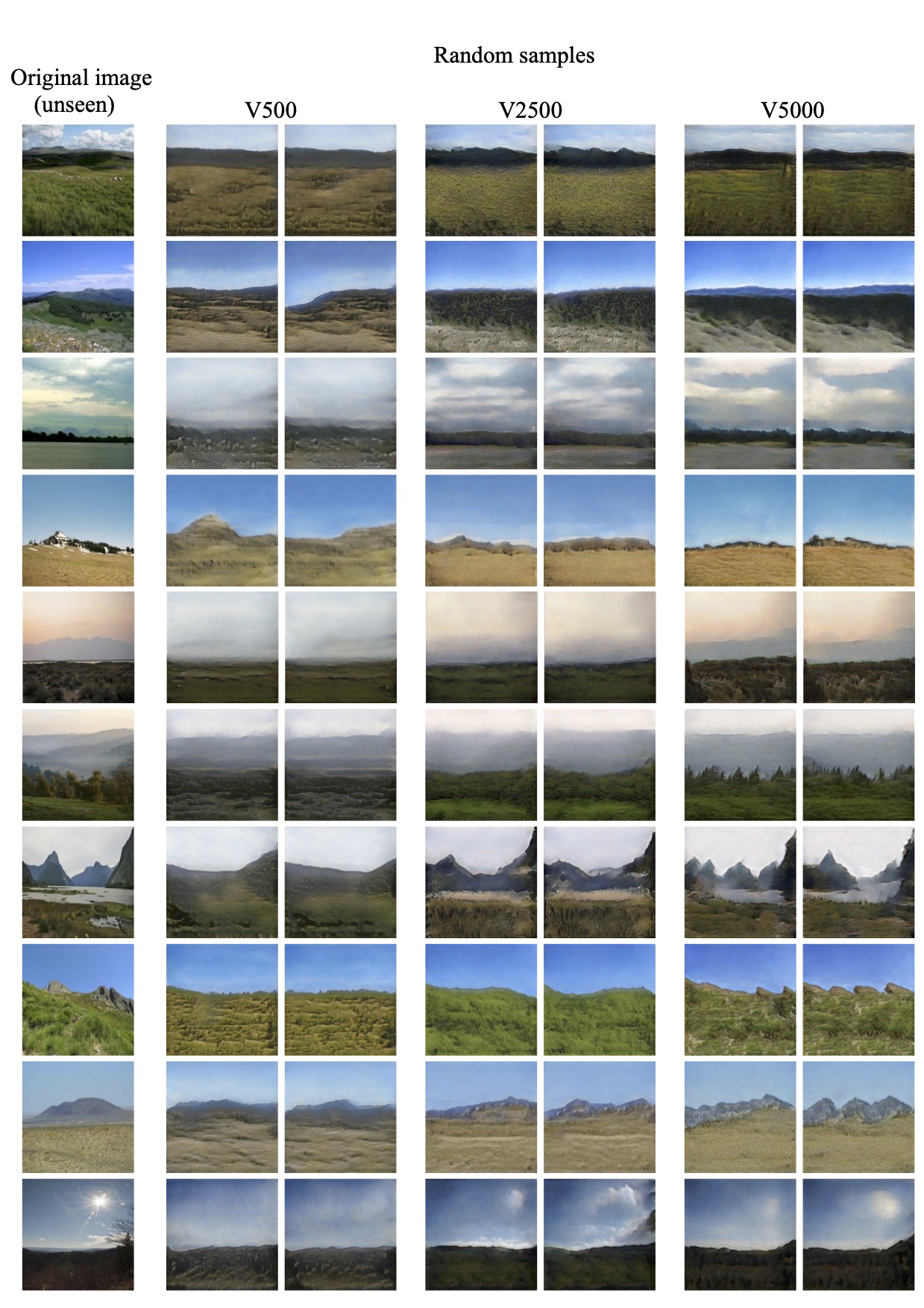}

\clearpage
\end{document}